\newcommand{\ie}{i.e., }
\newcommand{\eg}{e.g., }
\newcommand{\R}{\mathbb{R}}
\newcommand{\Z}{\mathbb{Z}}
\newtheorem{theorem}{Theorem}
\DeclareMathOperator*{\argmin}{arg\,min}
\definecolor{britishracinggreen}{rgb}{0.23, 0.53, 0.19}
\definecolor{navy}{rgb}{0,0,0.5}
\newcommand{\algabbr}{OSA-MIP\xspace}
\title{\LARGE \bf
Optimal Shelf Arrangement to Minimize Robot Retrieval Time
}
\author{Lawrence Yunliang Chen$^1$, Huang Huang$^1$, Michael Danielczuk$^1$, Jeffrey Ichnowski$^1$, Ken Goldberg$^1$
\thanks{$^1$The AUTOLab at UC Berkeley
, {\tt\small \{yunliang.chen, huangr, mdanielczuk, jeffi,  goldberg\}@berkeley.edu}}
}
\begin{document}

\maketitle
\thispagestyle{empty}
\pagestyle{empty}

\begin{abstract}
Shelves are commonly used to store objects in homes, stores, and warehouses.  
We formulate the problem of Optimal Shelf Arrangement (OSA), where the goal is to optimize the arrangement of objects on a shelf for access time given an access frequency and movement cost 
for each object. We propose \algabbr, a mixed-integer program (MIP), show that it finds an optimal solution for OSA under certain conditions, and provide bounds on its suboptimal solutions in general cost settings. We analytically characterize a necessary and sufficient shelf density condition for which there exists an arrangement such that any object can be retrieved without removing objects from the shelf. Experimental data from 1,575 simulated shelf trials and 54 trials with a physical Fetch robot equipped with a pushing blade and suction grasping tool suggest that arranging the objects optimally reduces the expected retrieval cost by 60--80\% in fully-observed configurations and reduces the expected search cost by 50--70\% while increasing the search success rate by up to 2x in partially-observed configurations. Supplementary material is available at \url{https://sites.google.com/berkeley.edu/osa}.
\end{abstract}

\section{Introduction}\label{sec:intro}
The ability for robots to retrieve target objects quickly from cluttered shelf environments has wide application in automation.  When shelves contain heterogeneous objects, the arrangement of objects plays a significant role in retrieval time.  For example, if frequent targets are in the front instead of the back, the robot can spend less time moving other objects out of the way to retrieve the target.
%

Prior work on robot manipulation of objects in shelf-like environments has explored task and motion planning for object rearrangement~\cite{krontiris2015dealing,nam2020fast,wang2021uniform} and the placement of relocated objects during the rearrangement process~\cite{cheong2020relocate}. 
This paper considers \emph{what makes a good arrangement}---specifically, how to arrange objects on a shelf to reduce robot retrieval time. Shelves often contain objects with varying  retrieval frequencies and movement costs. For example, in a refrigerator, objects required frequently such as milk should be near the front for easy retrieval. Olive jars and difficult-to-move objects should be near the back to avoid unnecessarily blocking the retrieval of other objects. In this work, we model the retrieval probability and movement cost of each object and formulate the problem of Optimal Shelf Arrangement (OSA): find an arrangement of objects on a shelf that minimizes the expected cost of retrieval.

\begin{figure}
    \centering
    \includegraphics[width=\linewidth]{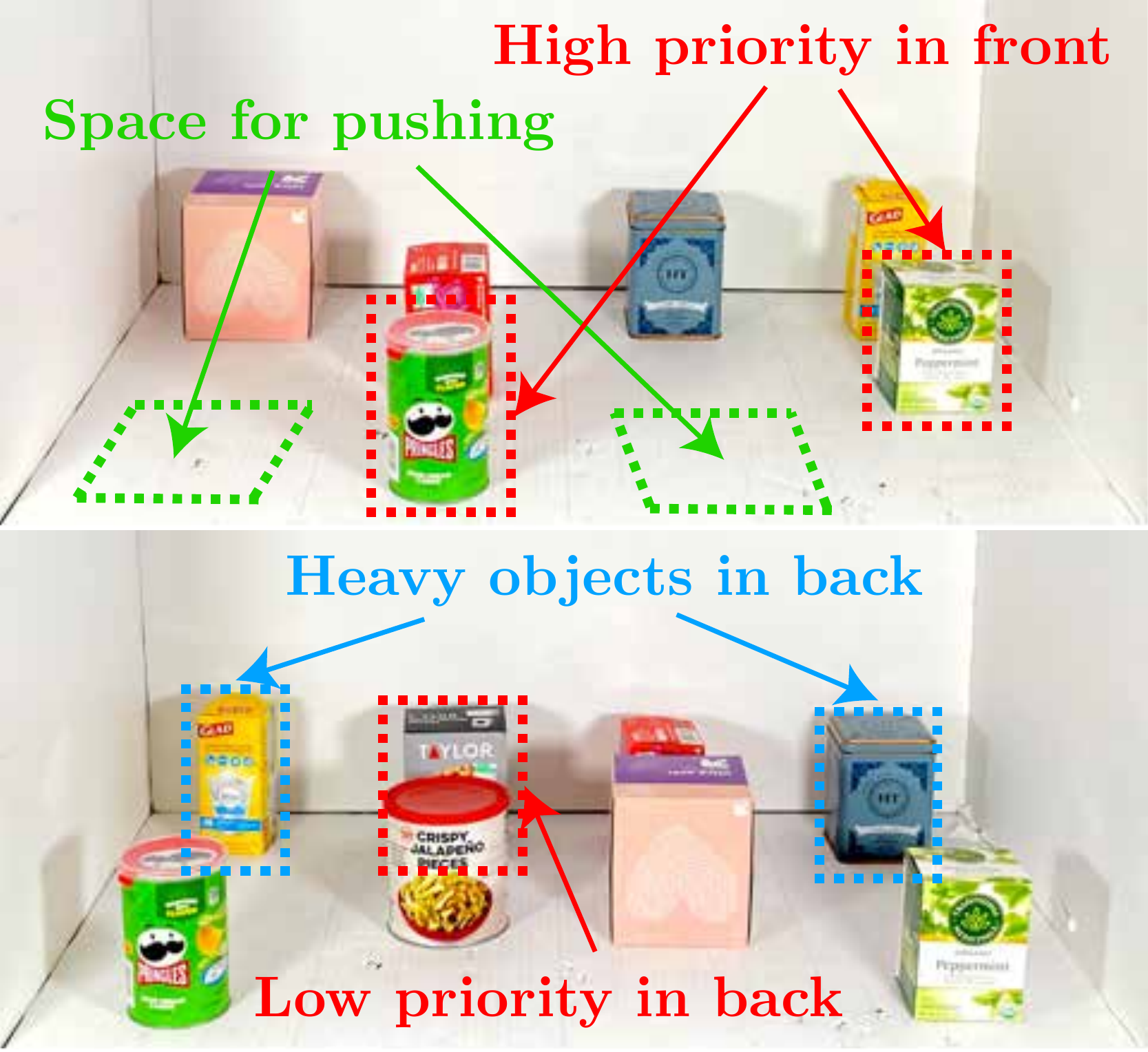}
    \caption{Two optimized shelf arrangements found by \algabbr for $n=6$ and $n=8$ objects in a discretized $4\times 3$ shelf. The object cost is proportional to its weight, with heavier objects having higher cost. Green objects have highest priorities, followed by red and yellow objects, with the blue and gray objects having the lowest priority. \algabbr trades off priority and moving costs while leaving space between objects to allow pushing actions.}
    \vspace{-10pt}
    \label{fig:bluction_splash}
\end{figure}

We consider a robot with a combined pushing blade and suction grasping end-effector, or \emph{bluction} tool~\cite{huang2022mechanical}, that can both push the objects left or right and perform pick-and-place. 
We assign costs both to each object and to the action type (i.e., the cost is action- and object-specific but path-independent) as in Han et al.~\cite{han2018efficient}, where the interpretation of the cost can be the time and effort it takes to relocate each object. The problem is challenging as it requires solving a nested optimization problem: for any arrangement, finding and computing the cost of an optimal action sequence that clears a path to retrieve each target object involves a state-space search itself (see Section~\ref{action} and~\ref{sec:sim_exp}). We bypass this nested optimization and propose \algabbr, a \emph{mixed-integer program} (MIP) that finds a near-optimal solution to the OSA problem. It discretizes the shelf and optimizes the assignment of object locations by minimizing an upper bound on the expected retrieval cost. We characterize situations where the MIP finds an exact optimal solution, and provide bounds on its suboptimality in general cases. We also give a necessary and sufficient shelf-density condition for which there exists an arrangement such that any object can be retrieved without removing objects from the shelf. We also perform simulation experiments exploring how \algabbr can change the expected retrieval cost when compared to baselines of Random and Priority-Greedy arrangements. 

While the goal of OSA is to optimize object retrieval when the configurations of all objects are known, we conjecture that an optimally-arranged shelf can also speed up object search when their positions are unknown---known as the mechanical search problem~\cite{huang2020mechanical}. We evaluate \algabbr arrangements by comparing the performance of a lateral-access mechanical search policy, SLAX-RAY~\cite{huang2022mechanical}, in both simulated and physical experiments.


This paper makes the following contributions:
\begin{enumerate}
    \item A formulation of the Optimal Shelf Arrangement (OSA) problem. Given an access frequency and object and action cost, find an arrangement of the objects on the shelf that minimizes expected retrieval cost.
    \item A mixed-integer program (MIP) finding an optimal solution to the OSA problem under certain conditions and a suboptimality gap analysis for general cost settings.
    \item Simulated experiments with 225 object sets comparing 3 arrangement policies 
    demonstrating a 60 -- 80\,\% reduction in expected one-time retrieval cost and cumulative retrieval cost (full observability case).
    \item Simulated and physical experiments comparing 3 arrangement policies 
    suggesting a 2x increase in search success on dense shelves and up to 70\,\% reduction in the expected search cost (partial observability case).
\end{enumerate}
\section{Related Work}
\subsection{Storage Assignment Policies}
Warehouse layout design and storage assignment policies in automated storage and retrieval systems (AS/RS) are well-studied~\cite{karasek2013overview}. {\"O}n{\"u}t et al.~\cite{onut2008particle} study the optimal warehouse layout design to minimize the total travel distances of the storage rack weighted by the unit handling cost of each item type and the items' yearly throughput. 
Hausman et al.~\cite{hausman1976optimal} compare random, turnover-based, and class-based turnover policies for storage assignment.
In particular, turnover-based assignment minimizes the expected one-way travel time \cite{hausman1976optimal,thonemann1998note} and has been extended to multiple command modes and performance measures~\cite{johnson1996stochastic}. Unlike AS/RS systems in which any pallet can be transported by a crane along a rectangular route, objects in a lateral shelf environment can only be accessed when all the obstacles in front of them are cleared. Thus, computing the total expected retrieval cost requires a rearrangement plan to be computed for each target object. 

\subsection{Moving-Block Problems} 
Motion planning in grid-based storage systems is challenging. One class of widely-studied problems is single-robot motion planning with movable obstacles, represented by the moving-block problems, including the games Sokuban, Pukoban, and Atomix, where a movable block called ``the man'' tries to move $k$ other movable blocks placed on a grid-square maze into specified target locations through push and/or pull actions. Pereira et al.~\cite{pereira2017solving} showed many moving-block problems are NP-hard or PSPACE-complete. Another class is remote motion planning to move a robot from its initial position to a goal position without colliding with obstacles, including the PSPACE-complete game Rush Hour~\cite{flake2002rush} and NP-Hard graph motion planning with one robot (GMP1R) problem~\cite{papadimitriou1994motion}. Other related problems include the single-item retrieval planning problem with multiple escorts (SRPME), which seeks to find a multi-robot/item motion plan that brings the requested item to one of the output locations with the minimum total number of item-moves~\cite{yalcin2018multi}, and puzzle-based storage systems~\cite{gue2007puzzle,kota2015retrieval,gue2013gridstore,mirzaei2017modelling}, where finding the fewest-move solution for a given order is NP-hard~\cite{ratner1986finding}. Unlike grid-based storage systems, where all items are routed through conveyors so they can move as long as there is space next to them, in the shelf environment, an item can only be moved when it is accessible by the robot.

\subsection{Memory Allocation}
Finding an optimal arrangement of objects on a shelf for easy retrieval has a similar goal to data and memory allocation for chip multiprocessors, and more specifically, the problem of task scheduling of applications on multiple processors to minimize energy consumption or execution time \cite{salamy2012effective}. Integer programming is a commonly used technique 
\cite{niemann1996hardware,kuang2005partitioning,avissar2002optimal,ozturk2006integer}. 
The \algabbr proposed in this work shares similarity with those integer program formulations such as Ozturk et al.~\cite{ozturk2006integer} where the data access frequencies are considered for memory partitioning when minimizing the overall energy consumption, but the constraints of \algabbr focus on clearing obstacles for retrieval instead of partitioning memory components.

\subsection{Object Rearrangement and Optimal Arrangement}
There is a rich literature on manipulation planning among movable obstacles (MAMO) \cite{stilman2007manipulation}. MAMO is a generalization of the navigation among movable obstacles (NAMO) problem, which is NP-hard \cite{stilman2005navigation}. Prior work~\cite{krontiris2015dealing,han2017high} study the task and motion planning of rearranging multiple objects from one configuration into another using pick-and-place actions on a tabletop. Shome et al.~\cite{shome2018fast,shome2020synchronized} study the multi-arm extension of the problem and formulate mixed integer programs (MIPs) using a graph representation. Cheong et al.~\cite{cheong2020relocate} focus on the subproblem of where to place relocated objects for object rearrangement. 
Wang et al.~\cite{wang2021uniform} develop a dynamic programming algorithm for uniform cylindrical-shaped objects that minimizes the number of object transfers to rearrange the objects into a goal configuration while avoiding object-object collision. 
Researchers~\cite{cosgun2011push,havur2014geometric,dabbour2019placement} have also studied the problem of placing objects onto a cluttered surface where existing objects need to first be rearranged to leave space. 

Nam et al.~\cite{nam2020fast} develop a task planner to relocate obstacles using lateral pick-and-place actions until a target object becomes reachable in a shelf environment. However, they assume all objects are cylinders graspable from any angle relative to the normal of the shelf back plane. In contrast, we assume the objects can only be accessed along the normal of the shelf back plane, and focus on finding an arrangement optimized for object retrieval instead of the rearrangement motion planning problem with a given target arrangement.


\subsection{Mechanical Search on Shelves}
Mechanical search~\cite{danielczuk2019mechanical} is the problem of locating and extracting an occluded target object when the arrangement of the objects is partially observable. Gupta et al.~\cite{gupta2013interactive} interactively explore a cluttered environment until the state of every voxel is known. Dogar et al.~\cite{dogar2014object} propose greedy and $A^*$ search algorithms for object search in a cluttered shelf. Lin et al.~\cite{lin2015planning} extend the problem setting to restrict removal of objects from the shelf and additionally consider pushing actions for large ungraspable objects. 
Recently, Huang et al.~\cite{huang2020mechanical,huang2022mechanical} proposed a SLAX-RAY search policy that maximizes the reduction in support area of a target object occupancy distribution using a ``bluction tool'' that combines a pushing blade and a suction cup gripper. In this work, we demonstrate that an optimal shelf arrangement can both increase the search success rate and reduce cost for SLAX-RAY in partially observable simulated and physical mechanical search settings.
\section{OSA Problem Statement} \label{action}
We consider the Optimal Shelf Arrangement (OSA) problem on a rectangular shelf discretized into a $m_x \times m_y$ grid with the coordinate frame shown in Figure~\ref{fig:misselection_of_mip}. Given $n$ objects of similar size, a distribution of object retrieval frequencies, 
and the cost of moving each object, the goal is to find an arrangement that minimizes the expected cost of retrieval. Each object $l$ occupies one cell $(i, j)$, where $i \in \lbrace 1, \, \ldots, \, m_x\rbrace, j \in \lbrace 1, \, \ldots, \, m_y\rbrace$. We call an object \textit{accessible} if there are no obstacles in front of it.

As in mechanical search~\cite{huang2022mechanical}, we consider two types of actions: pushing with a planar ``blade'' ($\mathcal{A}_p$) and suction pick-and-place ($\mathcal{A}_s$). At each timestep $t$, the robot performs an action $\mathbf{a}_t \in \mathcal{A}$ and incurs a cost $c : \mathcal{A} \rightarrow \R_{\geq 0}$, 
where $\mathcal{A} = \mathcal{A}_p \cup \mathcal{A}_s$.

\textbf{1) Pushing actions.}
Pushing actions in $\mathcal{A}_p$, parameterized as $\mathbf{a}_t = (i, j, d)$, where $d \in \{-1, 1\}$, pushes an object located at $(i, j)$ along the $x$-axis of the shelf frame to $(i+d, j)$. This action is only possible when the object is accessible and pushed to an empty space. We assume there is enough space between objects in neighboring cells to insert the blade and execute the pushing action.

\textbf{2) Suction actions. }
Suction actions in $\mathcal{A}_s$, parameterized as $\mathbf{a}_t = (i, j, i', j')$, start with the robot forming a seal between the suction cup and object, followed by 3 linear motions:
(1) lifting,
(2) translating the object along the $x$- and $y$-axes to column $i'$, row $j'$, and
(3) placing the object at its final position $(i', j')$. This action is only possible if the object to be suctioned is accessible, the target position is empty, and there are no objects in front of the target position.

\textbf{3) Removal actions. }
When there are no available placement positions on the shelf, a suction action $\mathbf{a}_t = (i, j, -1,$ $-1) \in \mathcal{A}_s$ may place the object outside the shelf. 
The robot incurs a penalty for removing an object from the shelf, as it results in more planning and execution time to move the robot base to access buffer spaces outside the shelf~\cite{cheong2020relocate}.

OSA requires the following parameters:
\begin{itemize}
  \item $p_l$: a probability distribution for requesting object $l$;
  \item $c_{lp}$: the cost of moving object $l$ with a pushing action;
  \item $c_{ls}$: the cost of moving object $l$ with a suction action;
  \item $c_r$: the penalty for removing an object from the shelf.
\end{itemize}

Let $\Delta c_l=c_{ls} - c_{lp}$, and $\Delta c_l \geq 0 ~\forall l$. We assume $\Delta c_l \geq 0$ because suction actions are more prone to failures such as suction seal loss than pushing actions and because prior work~\cite{huang2022mechanical} empirically found that suction pick-and-place actions take approximately 1.3 times longer than pushing actions. Thus, we set $c_{ls} = 1.3 c_{lp}$ in our simulated and physical experiments, but the OSA problem and the proposed \algabbr (Section \ref{sec:discrete_mip}) allow general $c_{lp}$ and $c_{ls}$ values.

Given an arrangement $\mathcal{S}: [1 .. n] \rightarrow [1 .. m_x] \times [1 .. m_y]$, we let $\mathbf{A}_l^\mathcal{S} = \{\mathbf{a}_1^{\mathcal{S}}(l),  ..., \mathbf{a}_{t_l}^{\mathcal{S}}(l)\}$ denote an optimal sequence of actions to retrieve target object $l$ such that all obstacles in front of $l$ have been cleared and the total cost of the action sequence $\mathrm{cost}(l^\mathcal{S}) = \sum_{i=1}^{t_l} c(\mathbf{a}_{i}^{\mathcal{S}}(l))$ is minimized. The final retrieval action is not included in $\mathbf{A}_l^\mathcal{S}$ since its cost is the same for all configurations. The cost of an arrangement $\mathcal{S}$ is the expected cost of retrieval among all objects: $C(\mathcal{S}) = \mathbb{E}_{l \sim p_l}[\text{cost}(l^\mathcal{S})]$. The goal is to find an arrangement $\mathcal{S}^* = \argmin_\mathcal{S} C(\mathcal{S})$ that minimizes the expected cost of retrieval.

\section{Shelf Density Analysis}
\label{sec:density}
We define the density of a shelf as $\rho := \frac{n}{m_xm_y}$. A shelf is \emph{dense} if $\rho > 1 - \frac{1}{m_x} + \frac{1}{m_xm_y}$. In Theorem~\ref{thm:density}, we show that when the density exceeds this threshold, removal is unavoidable for retrieving objects at the back of the shelf. For a square shelf ($m_x = m_y$), this threshold is at least $\min_{m>0} (1 - \frac{1}{m} + \frac{1}{m^2}) = 0.75$. An arrangement $\mathcal{S}$ is \textit{hollow} if there exists an empty cell behind an occupied cell, and such empty cell is called a \textit{cavity}. The \textit{consolidation} process pushes all objects as far back as possible to remove all cavities from a hollow arrangement, and we denote the resulting configuration $\mathcal{S}_\mathrm{consol}$.

\begin{theorem}\label{thm:density}
There exists an arrangement that does not require removal of objects to retrieve any target if and only if the shelf is not dense.
\end{theorem}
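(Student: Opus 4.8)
The plan is to prove the two directions separately, the crux in each being a careful accounting of which cells an action can move an object \emph{into}. Throughout, let $e := m_x m_y - n$ denote the number of empty cells and $o_i$ the number of objects in column $i$; ``not dense'' is exactly $e \ge m_y - 1$, while ``dense'' is $e \le m_y - 2$, i.e. $n \ge (m_x-1)m_y + 2$. Fix the convention that the shelf opens toward small $j$, so cell $(i,j)$ is accessible iff $(i,1),\dots,(i,j-1)$ are empty and consolidation packs each column's objects against the largest $j$.

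\emph{Sufficiency.} When $e \ge m_y - 1$ I claim \emph{every} consolidated arrangement works, so one may distribute the objects among columns arbitrarily and push each column to the back. To retrieve the target at $(i,j)$, the cells to be cleared are the occupied cells in front of it within column $i$, of which there are at most $o_i - 1$ since the column is packed at the back. Evacuate them one at a time by suction: the current front-most object of column $i$ is always accessible, and we deposit it into some other column $k$, filling $k$ from its back-most free cell forward so that each placement cell is accessible when used. The columns other than $i$ contain $\sum_{k\ne i}(m_y - o_k) = e - (m_y - o_i)$ free cells, and the identity $o_i - 1 \le e - (m_y - o_i) \iff e \ge m_y - 1$ guarantees enough room; afterwards $(i,j)$ is accessible and nothing has left the shelf.

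\emph{Necessity.} I would induct on $m_y$ with $m_x$ fixed; $m_y = 1$ is vacuous since such a shelf cannot be dense. For the inductive step, suppose $\mathcal{S}$ is dense yet every object is retrievable without removal. First, \emph{the back row of $\mathcal{S}$ is empty}: take any occupied column $c$, with back-most object $l$ at depth $d_c$, and look at the configuration just before $l$ is grasped. The cells in front of $l$ in column $c$ are then empty, and the only objects that can lie behind $l$ in column $c$ are ones that were \emph{pushed} there laterally — a suction placement behind $l$ is blocked by $l$ itself — so column $c$ holds at most $1 + (m_y - d_c)$ objects; the remaining $\ge n - 1 - (m_y - d_c)$ objects are crammed into $m_x - 1$ columns, forcing $n \le m_x m_y + 1 - d_c$ and hence $d_c \le m_y - 1$. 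So $\mathcal{S}$ lives on the $m_x \times (m_y - 1)$ sub-shelf, where it is still dense since $(m_x-1)m_y + 2 > (m_x-1)(m_y-1) + 1$. By the inductive hypothesis some object $l'$ cannot be retrieved using only the front $m_y - 1$ rows, yet it can be retrieved in the full shelf, so some retrieval sequence for $l'$ places an object into the back row; its first such placement cannot be a push (the back row is still empty at that instant) and must be a suction into some $(i, m_y)$, which requires column $i$ to be empty at that moment. Then all $n$ objects occupy the other $m_x - 1$ columns within their front $m_y - 1$ rows, so $n \le (m_x-1)(m_y-1) < (m_x-1)m_y + 2 \le n$, a contradiction.

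The main obstacle, and the reason for the induction, is that naive ``free-cell counting'' is not valid: a push can deposit an object behind an occupied cell, so it can hide objects behind the very target being retrieved, whereas a suction cannot — and a single pigeonhole count over the whole shelf fails for thin, deep shelves ($m_y \gg m_x$). Pinning down the ``back row is empty'' step — in particular the bound $m_y - d_c$ on the number of objects that can have been pushed in behind $l$ — and checking that density is inherited by the $(m_y-1)$-deep sub-shelf is where I expect the real care to be needed; the remaining small cases ($m_x = 1$, empty or under-full columns, tiny $n$) are routine.
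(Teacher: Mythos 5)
Your proof is correct, and for the harder direction it takes a genuinely different route from the paper. The sufficiency half is essentially the paper's argument --- count the free cells in the other columns of a consolidated arrangement --- though you add the nice observation that the inequality $o_i-1\le e-(m_y-o_i)$ collapses to $e\ge m_y-1$ independently of $o_i$, so \emph{every} consolidated arrangement works, not just a specially constructed one. For necessity the paper instead argues that a hollow arrangement never needs fewer removals than its consolidated version, on the grounds that removal is avoidable iff the other columns contain at least as many empty cells as there are obstacles in front of the target; the ``only if'' half of that claim is asserted rather than proved, and it is exactly the loophole you identify (obstacles being parked, via lateral pushes, in cavities of the target's own column behind the target). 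Your induction on $m_y$, with the ``back row of a retrievable dense arrangement is empty'' lemma and the analysis of the first placement into the back row, closes that loophole rigorously; the price is the extra bookkeeping of verifying that density is inherited by the $m_x\times(m_y-1)$ sub-shelf, which you do. Two small points to tighten: (i) the paper's model (cf.\ its discussion of Figure 3) permits suctioning an object backward within its own column, so the first suction into $(i,m_y)$ could originate from column $i$ itself; then column $i$ holds one object rather than none at that instant and your final count weakens to $n-1\le(m_x-1)(m_y-1)$, which still contradicts $n\ge(m_x-1)m_y+2$ for $m_x\ge 1$; (ii) ``just before $l$ is grasped'' should be read as ``at the first instant $l$ is accessible,'' since neither push nor suction can act on $l$ earlier --- with that reading, your bound of $m_y-d_c$ on the number of objects pushed in behind $l$ is exactly right.
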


\begin{proof}
First, we observe that for any retrieval target, a hollow arrangement will never reduce the number of removals required compared to its consolidated version. This is because if $k$ objects are in front of the target and need to be relocated, no removal is required if and only if the total number of empty cells in other columns is at least $k$; if some empty cells are cavities, then one may need to first consolidate the arrangement so that these empty cells are in the front of the shelf to be available to place relocated obstacles. Thus, for any $m_x \times m_y$ cell, to calculate the maximum number of objects that can fit onto the shelf without requiring removal for retrieving any objects, we only need to consider consolidated arrangements. 

Suppose the back $m_y-k$ rows are full, and there are $r$ objects in the first $k$ rows. We only need to make sure the backmost object can be successfully retrieved without requiring any removal in each column. Consider any column and assume it has $m_y - k + u$ objects. In order to retrieve the backmost object, we need to relocate $m_y - k + u - 1$ objects into the other columns, which has space equals $k (m_x-1) - (r - u)$ cells. So no removal is required if and only if 
$$k (m_x-1) - (r - u) \geq m_y - k + u - 1,$$
which simplifies to $k \geq \frac{m_y + r -1}{m_x}$.
By plugging $k$ into density $\rho = \frac{\left(m_y - k\right)m_x+r}{m_x m_y}$, we get $\rho \leq \frac{\left(m_y - \frac{m_y + r -1}{m_x}\right) m_x + r}{m_x m_y} = 1 - \frac{1}{m_x} + \frac{1}{m_xm_y}$. Any denser shelf would not have an arrangement that avoids removal for all target objects.
\end{proof}

\section{MIP Formulation} \label{sec:discrete_mip}
We propose a mixed-integer program (MIP) to solve OSA and analyze its optimality in Section~\ref{sec:optimality_analysis}. By assumption, to retrieve a target object, all obstacles in front of it must be relocated. For each obstacle to be moved, all objects in front of it must be cleared by pushing or suction actions. As we assume $c_{ls} \geq c_{lp} ~\forall l$, pushing actions are preferred over suction. Given these observations, we formulate the MIP as:\\
\noindent
\textbf{Indices:}
\begin{itemize}[noitemsep,topsep=0pt,parsep=0pt,partopsep=0pt,label={},leftmargin=*]
  \item[]$l \in \lbrace 1, \, \dots, \, n\rbrace$: object index;
  \item[] $i \in \lbrace 1, \, \dots, \, m_x \rbrace$: x position of the grid;
  \item[] $j \in \lbrace 1, \, \dots, \, m_y \rbrace$: y position of the grid.
\end{itemize}
\noindent
\textbf{Exogenous parameters:}
\begin{itemize}[noitemsep,topsep=0pt,parsep=0pt,partopsep=0pt, label={},leftmargin=*]
    \item $p_l, c_{lp}, c_{ls}, c_r$: retrieval frequency and moving costs for object $l$.
\end{itemize}
\noindent
\textbf{Decision variables:}
\begin{itemize}[noitemsep,topsep=0pt,parsep=0pt,partopsep=0pt] 
  \item[$x_{l i j}$] $\in \lbrace 0, 1\rbrace$: indicates if object $l$ is at $(i, j)$;
  \item[$y_{l i j}$] $\geq 0$: cost of retrieving object $l$ from $(i, j)$, without including removal penalties;
  \item[$a_{i j}$] $\in\{0,1\}$:  indicates if cell $(i, j)$ is occupied;
  \item[$e_{i j}$] $\in \lbrace 0, \, \ldots, \, m_y\rbrace$: the number of objects in front of and including cell $(i, j)$;
  \item[$f_{i j}$] $\in\lbrace 0,1\rbrace$: indicates if all cells in front of and including $(i, j)$ are empty;
  \item[$d_{i}$] $\in \lbrace 0, \, \ldots, \, m_y\rbrace$: the number of empty cells at the front of column $i$;
  \item[$\delta_{i j}$] $\in\lbrace 0,1\rbrace$: indicates if either $(i-1, j)$ or $(i+1, j)$ are open (\ie pushes are available);
  \item[$\delta^p_{l i j},$] $\delta^s_{l i j} \in\lbrace 0,1\rbrace$: indicates if object $l$ is at $(i, j)$ and can be pushed or can only be suctioned, respectively;
  \item[$\tilde\delta_{i j}$] $\in\lbrace 0,1\rbrace$: indicates if there is an object at $(i, j)$ that can be pushed behind another object;
  \item[$b_{l i j}$] $\in \lbrace 0, \, \ldots, \, m_y-1\rbrace$: the number of objects that must be removed from the shelf to retrieve object $l$ from $(i, j)$.
\end{itemize}
\noindent
\textbf{Objective:}
\begin{align}
    \text{Minimize } &\sum_{l=1}^{n} \sum_{i=1}^{m_x} \sum_{j=1}^{m_y} p_{l}(y_{l i j}+c_r b_{l i j}) \label{mip_obj}
\end{align}
The objective~\eqref{mip_obj} minimizes the retrieval cost $y_{l i j}$ for each object $l$ (which depends on $c_{l p}$ and $c_{l s}$) plus any removal penalty incurred, weighted by object $l$'s retrieval frequency $p_l$. We show in Section~\ref{sec:optimality_analysis} that this quantity is an upper bound on the true expected cost $C(\mathcal{S})$. \\
\noindent
\textbf{Constraints:}

\begin{align}
    &\sum_{i=1}^{m_x} \sum_{j=1}^{m_y} x_{l i j} =1, &&\forall l \label{mip_object}
\end{align}
\begin{align}
    & a_{i j} = \sum_{l=1}^{n} x_{l i j} \leq 1  &&\forall i, j \label{mip_position} \\
    & \delta_{i j} = \neg \left( a_{i-1, j} \land a_{i+1, j}\right), &&\forall i, j \nonumber \\
    &\delta^p_{l i j} = x_{l i j} \land \delta_{i j}, \quad \delta^s_{l i j} = x_{l i j}, \land \neg \delta_{i j} &&\forall l, i, j \nonumber \\
    &e_{i j} = \sum_{j^{\prime}=1}^{j} a_{i j'},\quad f_{i j} = \mathds{1}\left(e_{ij} = 0\right), &&\forall i, j \nonumber \\
    &d_i = \sum_{j=1}^{m_y} f_{i j}, &&\forall i \nonumber \\
    y_{l i j} =& \resizebox{0.69\columnwidth}{!}{%
                    $\begin{cases} 
                    \resizebox{0.5\columnwidth}{!}{%
                    $\sum_{l'\neq l} \sum_{j^{\prime}=1}^{j-1} [c_{l' p} \delta^p_{l' i j'} + c_{l' s} \delta^s_{l' i j'}]$%
                    }  & \text{if } x_{l i j} = 1,\\
                    0 & \text{if } x_{l i j} = 0,
                \end{cases}$%
                    }  &&\forall l, i, j \nonumber \\
    \tilde \delta_{i j} =& \resizebox{0.69\columnwidth}{!}{%
                    $a_{ij} \land \neg \left( \left( a_{i-1, j} \lor f_{i-1, j} \right) \land \left( a_{i+1, j} \lor f_{i+1, j}\right)\right)$%
                    } &&\forall i, j \nonumber\\
    b_{l i j} =& \resizebox{0.69\columnwidth}{!}{%
                    $\begin{cases} 
                    \resizebox{0.8\columnwidth}{!}{%
                    $\max{\left(e_{i j} - x_{l i j} - \sum_{j^{\prime}=1}^{j-1} \tilde \delta_{i j'} - \sum_{i^{\prime} \neq i} d_{i'}, 0\right)}$%
                    }                     & \text{if } x_{l i j} = 1,\\
                    0 & \text{if } x_{l i j} = 0,\\
                \end{cases}$%
                    }  &&\forall l, i, j \nonumber\\
    &x_{l i j}, f_{i j}, \delta_{i j}, \delta^p_{l i j}, \delta^s_{l i j}, \tilde \delta_{i j}, a_{i j} \in\lbrace0,1\rbrace, \quad &&\forall l, i, j, \nonumber\\
    &y_{l i j}  \in \R_{\geq 0},  b_{l i j}, d_i, e_{i j} \in \Z_{\geq 0}, \quad &&\forall l, i, j. \nonumber
\end{align}
    
Constraints \eqref{mip_object} and \eqref{mip_position} ensure that each object has exactly 1 position and that each position has at most 1 object. All other constraints formally define the decision variables described above.
All of the switch-case constraints can be linearized by introducing auxiliary variables; a fully expanded formulation is included in the Appendix. By Theorem \ref{thm:density}, if the shelf is not dense, then an arrangement that does not require objects to be removed for retrieval of any target object is possible. In this case, we can add a constraint $\sum_{l=1}^{n} \sum_{i=1}^{m_x} \sum_{j=1}^{m_y} p_{l}b_{l i j} \leq 0$ in place of a large $c_r$.

\section{Analysis of \algabbr} \label{sec:optimality_analysis}

In this section, we analyze the optimality of \algabbr's solutions. Because the retrieval cost depends not only on the total number of actions but also on each object acted on, finding an optimal action sequence $\mathbf{A}_l^\mathcal{S}$ to retrieve each object requires a state-space search in the general cost settings (see Section~\ref{sec:sim_exp}). Preemptive actions that first rearrange the available shelf space, instead of only relocating obstacles in front of the target, may end up being cheaper even if they require more action steps. However, finding an optimal arrangement $\mathcal{S}^* = \argmin_\mathcal{S} \mathbb{E}_{l \sim p_l}[\text{cost}(l^\mathcal{S})]$ where each inner cost computation requires solving a tree-search subproblem for every target object can be intractable. 

\algabbr computes an upper bound for the optimal retrieval cost for each object and minimizes that upper bound. 
Formally, we call an action sequence $\mathbf{A}_l^\mathcal{S}$ \textit{non-preemptive} if it only involves relocating obstacles in front of the target object $l$, and \textit{preemptive} otherwise. Let $\hat{\text{cost}}(l^\mathcal{S})$ be the optimal cost of retrieving target object $l$ and $\hat C(\mathcal{S}) = \mathbb{E}_{l \sim p_l}[\hat{\text{cost}}(l^\mathcal{S})]$ be the corresponding expected retrieval cost when only non-preemptive actions are allowed. Then, as \algabbr only considers non-preemptive actions for tractability, it optimizes for $\hat C(\mathcal{S})$ instead of $C(\mathcal{S})$, with $\hat C(\mathcal{S}) \geq C(\mathcal{S}) ~\forall \mathcal{S}$, and may obtain a suboptimal solution $\mathcal{S}_{MIP}$ when $\mathcal{\hat C(S}_{MIP}) < \mathcal{\hat C(S^*})$ but $\mathcal{C(S}_{MIP}) > \mathcal{C(S^*)}$, where $\mathcal{S^*}$ is an optimal arrangement.


\begin{figure}
    \centering
    \vspace{4pt}
    \includegraphics[width=0.85\linewidth]{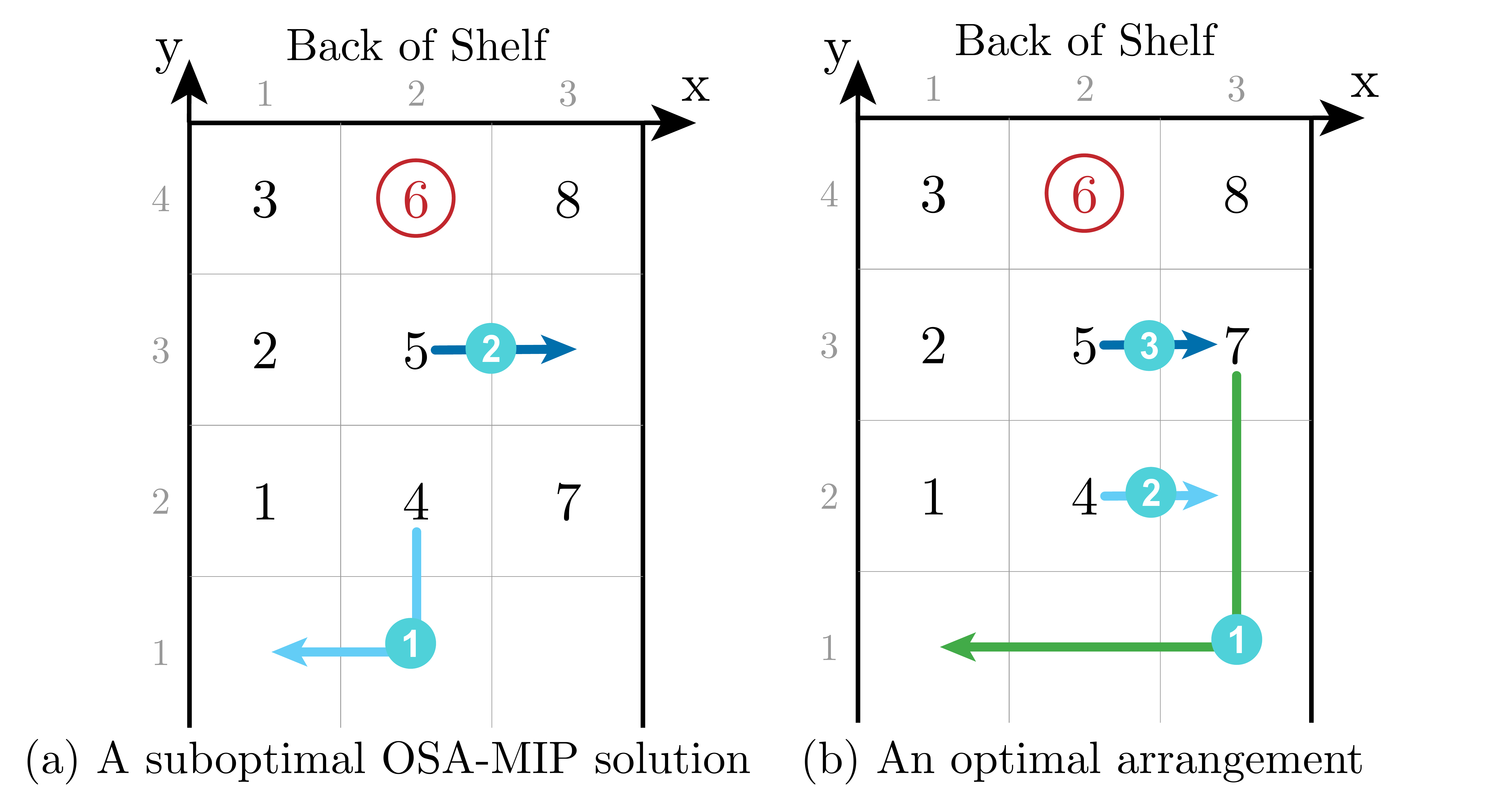}
    \caption{(a) An example where \algabbr finds a suboptimal arrangement for retrieving object 6 if $\Delta c_4 \leq c_{7s} \leq \Delta c_5$ and $(p_5+p_6) \Delta c_4 > p_6 c_{7s}$ and (b) an optimal arrangement resulting in a lower cost retrieval despite more actions. In (b), the optimal action sequence is to suction object 7 so that object 5 can then be pushed. As \algabbr does not consider preemptive actions, it instead leaves a cavity at (3,3) to push object 5.}
    \vspace{-4pt}
    \label{fig:misselection_of_mip}
\end{figure}

\begin{figure}
    \centering
    \includegraphics[width=0.9\linewidth]{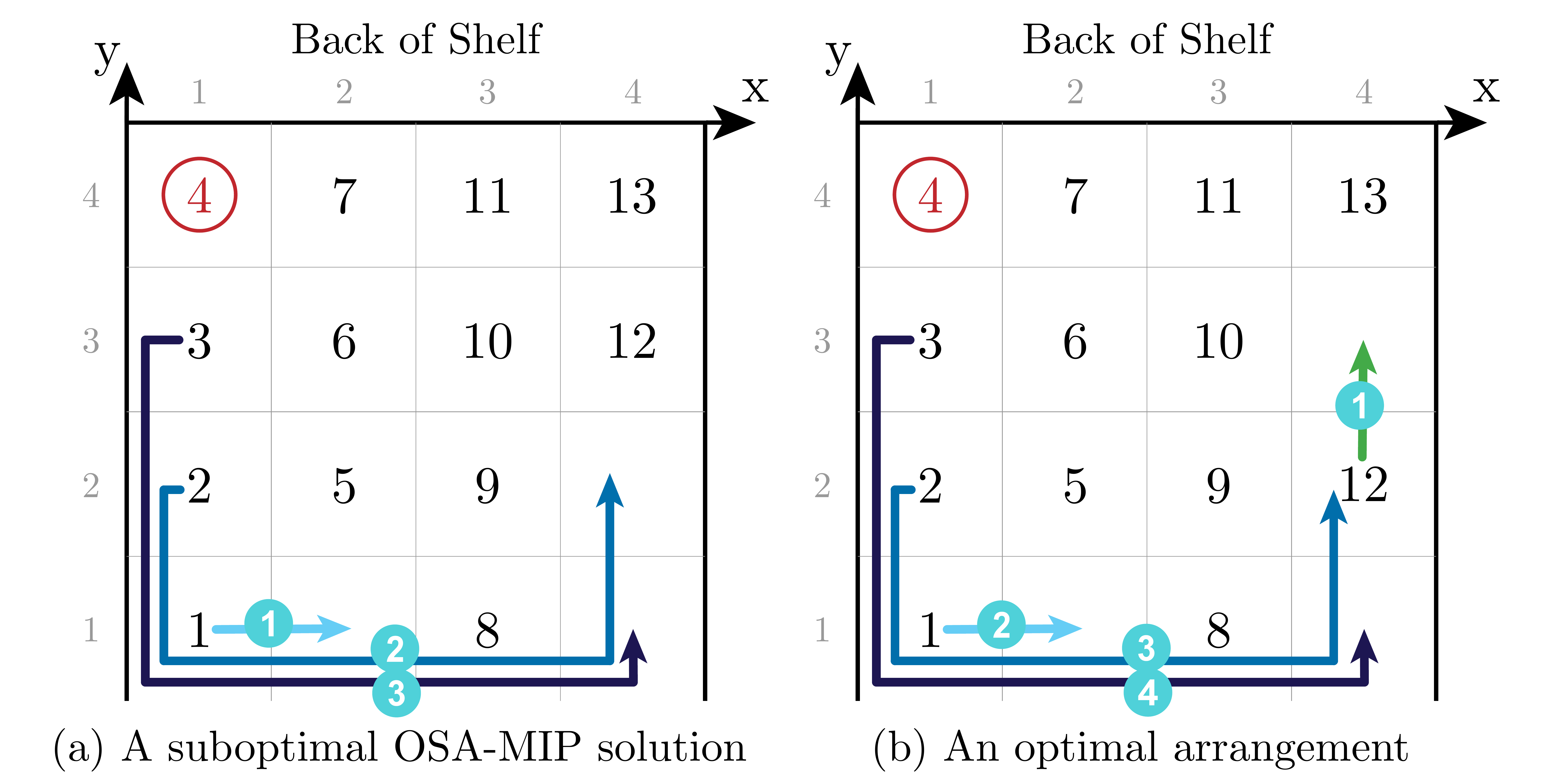}
    \caption{Another example of (a) a suboptimal \algabbr solution and (b) an optimal arrangement. To retrieve object 4, the optimal action sequence is to suction object 12 backward to make space for relocating objects 1-3. As \algabbr does not consider preemptive actions, it will place object 12 at (4,3) to avoid removing objects if $c_r$ is large. However, this arrangement is suboptimal if $p_4+p_7$ and $p_{10}$ are small compared to $p_{11}$ and $\Delta c_{10} > \Delta c_9$.}
    \vspace{-12pt}
    \label{fig:suboptimality_of_mip}
\end{figure}

Figure~\ref{fig:misselection_of_mip} illustrates an example where \algabbr may find a suboptimal solution. In particular, if object 7 has a small suction cost, it may be cheaper to retrieve object 6 in arrangement (\ref{fig:misselection_of_mip}b) than (\ref{fig:misselection_of_mip}a) even though it takes one more action, as illustrated by the arrows in Figure~\ref{fig:misselection_of_mip}. However, as \algabbr does not consider preemptive actions (the green arrow) and will use a pushing action on object 4 and a suction action on object 5, it overestimates the retrieval costs for arrangement (\ref{fig:misselection_of_mip}b). It can be shown that if $p_6 (\Delta c_5 - \Delta c_4) > p_5 \Delta c_4 > p_6 (c_{7s} - \Delta c_4)$, \algabbr will suboptimally find (\ref{fig:misselection_of_mip}a) even though (\ref{fig:misselection_of_mip}b) is optimal.


\begin{figure*}[t!]
    \centering
    \includegraphics[width=\linewidth]{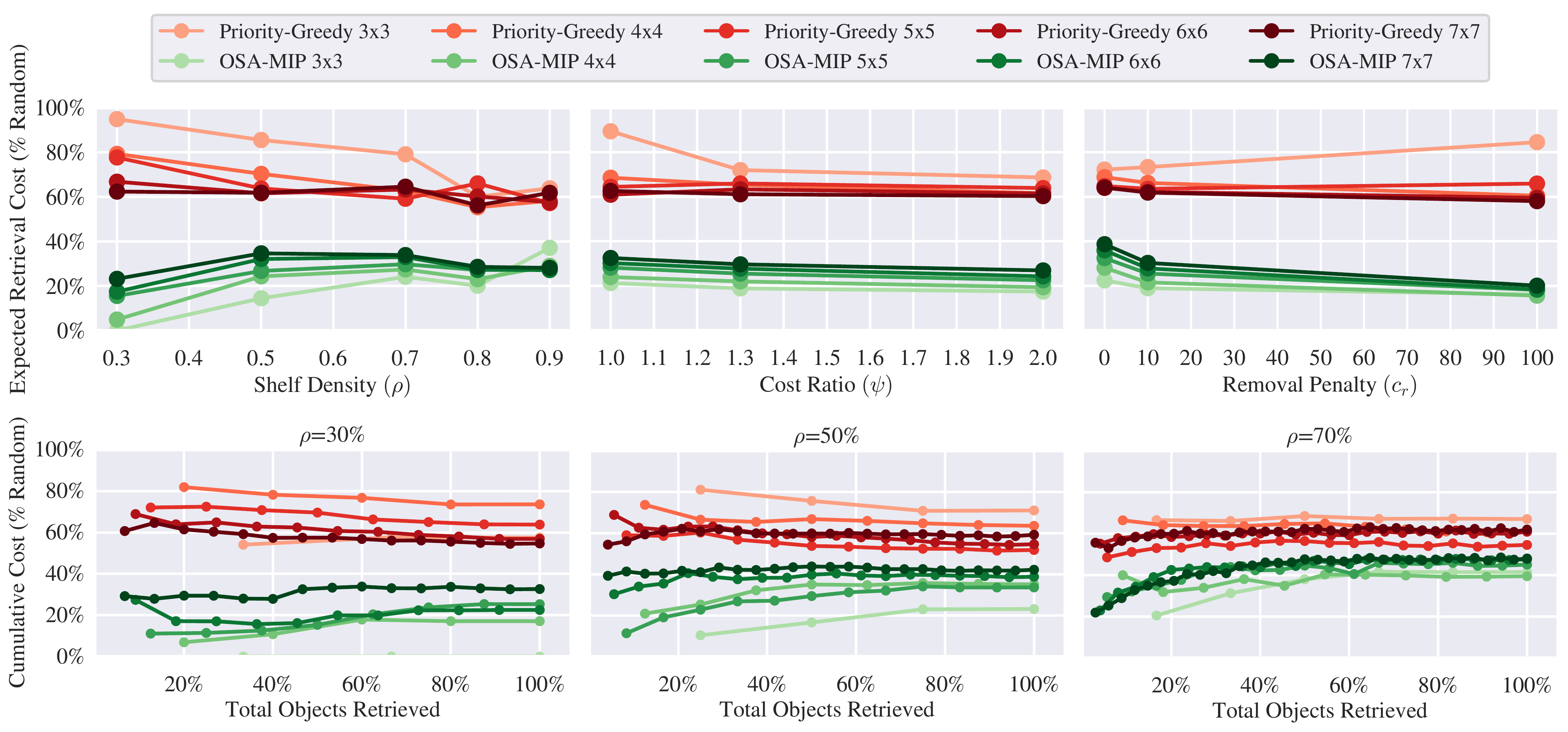}
    \caption{(Top row) Effect of shelf density $\rho$ (left), cost ratio $\psi$ (middle), and removal penalty $c_r$ (right) on the cost of \algabbr and Priority-Greedy arrangements as a percentage of random arrangement costs across shelf sizes. (Bottom row) Cumulative retrieval cost for \algabbr and Priority-Greedy arrangements as a percentage of random arrangement costs for the sequential retrieval task across different shelf densities. \algabbr outperfoms baselines despite optimizing only for a single retrieval; however, for denser shelves or as more objects are retrieved, the relative benefit of an \algabbr arrangement decreases.}
    \vspace{-12pt}
    \label{fig:mip_hyperparameters}
\end{figure*}

Additionally, \algabbr may also produce suboptimal consolidated arrangements to avoid removal when in fact hollow arrangements are optimal, as shown in Figure~\ref{fig:suboptimality_of_mip}. The arrangement in Figure~\ref{fig:suboptimality_of_mip}b does not require removal, but to retrieve object 4 or 7, object 12 must be pushed backward to make space for relocating objects 1-3 or 5-6, respectively. As \algabbr does not consider this backward push, it will think arrangement (\ref{fig:suboptimality_of_mip}b) requires removal and overestimate its cost, and it will put object 12 at (4,3) position to avoid removing objects, as shown in Figure~\ref{fig:suboptimality_of_mip}a even though (\ref{fig:suboptimality_of_mip}b) can be optimal as it enables pushing object 10. The Appendix contains a full cost analysis of Figures~\ref{fig:misselection_of_mip} and \ref{fig:suboptimality_of_mip}. 


However, in cases where the cost saving for pushing compared to suctioning is the same for all objects and preemptive actions are not beneficial, \algabbr will be exact, as the following theorem states.

\begin{theorem}\label{correctness_of_mip}
If $c_{ls} - c_{lp} \equiv \Delta c \leq \min_l c_{lp} \text{ }\forall l$, then \algabbr finds an optimal solution.
\end{theorem}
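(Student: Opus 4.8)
The plan is a sandwiching argument. As already recorded just above the statement, \algabbr minimizes $\hat C(\mathcal{S})=\E_{l\sim p_l}[\hat{\text{cost}}(l^\mathcal{S})]$, the expected retrieval cost when only non-preemptive action sequences are allowed, and $\hat C(\mathcal{S})\ge C(\mathcal{S})$ for every arrangement $\mathcal{S}$. So it suffices to prove the single claim that, under the hypothesis $\Delta c_l\equiv\Delta c\le\min_l c_{lp}$, preemptive actions never strictly reduce retrieval cost, i.e. $\hat C(\mathcal{S})=C(\mathcal{S})$ for all $\mathcal{S}$. Granting this, let $\mathcal{S}_{MIP}$ be \algabbr's output (a minimizer of $\hat C$) and $\mathcal{S}^*$ a minimizer of $C$; then $C(\mathcal{S}^*)\le C(\mathcal{S}_{MIP})=\hat C(\mathcal{S}_{MIP})\le \hat C(\mathcal{S}^*)=C(\mathcal{S}^*)$, forcing $C(\mathcal{S}_{MIP})=C(\mathcal{S}^*)$, so $\mathcal{S}_{MIP}$ is optimal.

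To prove the claim, fix $\mathcal{S}$ and a target $l$, let $P$ be an optimal (preemption-allowed) retrieval plan, and induct on the number of preemptive actions $P$ contains; if it is zero we are done, since non-preemptive plans are a subclass and the reverse inequality $\hat C(\mathcal{S})\ge C(\mathcal{S})$ is automatic. Otherwise let $\alpha$ be the \emph{first} preemptive action in $P$. Because pushes are only lateral, $\alpha$ moves some object $X$ that is not in front of $l$, and $\text{cost}(\alpha)$ is $c_{Xp}$, $c_{Xs}=c_{Xp}+\Delta c$, or $c_r$ (removal); in the first two cases $\text{cost}(\alpha)\ge\min_l c_{lp}\ge\Delta c$. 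Delete $\alpha$ from $P$: this changes only the occupancy of the source cell (now still holding $X$) and the target cell (now still empty), so the only later actions of $P$ that can break are those that used the source cell as a free cell. Using the freedom in choosing suction destinations and the still-free target cell as a substitute, one argues the remaining actions can be carried out with at most one later push downgraded to a suction --- extra cost at most $\Delta c$ --- or, if $P$ later re-moves $X$, the two moves splice and there is no extra cost. The repaired plan has one fewer preemptive action and cost $\le \text{cost}(P)-\text{cost}(\alpha)+\Delta c\le \text{cost}(P)$; iterating removes all preemptive actions and produces a non-preemptive plan of cost $\le\text{cost}(l^\mathcal{S})$, giving $\hat C(\mathcal{S})\le C(\mathcal{S})$.

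The main obstacle is the repair estimate: showing that deleting one preemptive action cascades into \emph{at most one} push-to-suction downgrade rather than a long chain. I would make this rigorous by an induction over the suffix of $P$ that tracks which cells are empty and clear-front, observing that a push consumes exactly one empty cell while vacating another, so a unit ``deficit'' of empty cells can always be carried forward by redirecting exactly one action at each step. The removal case ($\text{cost}(\alpha)=c_r$) needs separate care and is cleanest to handle by first proving the theorem on non-dense shelves --- where by Theorem~\ref{thm:density} an arrangement with no forced removals exists and \algabbr's objective reduces to the pure action-cost term --- and then noting that a lateral push or a suction preemptive move preserves the total number of empty cells while the MIP's removal count already exploits occluded (behind-object) empty space, so no preemptive action can eliminate a forced removal; alternatively one folds removals into the same exchange, still using $\text{cost}(\alpha)\ge\Delta c$ to bound the per-step repair overhead. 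By contrast, the sandwich wrapper and the bound $\text{cost}(\alpha)\ge\min_l c_{lp}\ge\Delta c$ are routine.
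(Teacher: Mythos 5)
Your overall architecture matches the paper's: the paper's entire proof is the one-sentence version of your exchange argument (``each preemptive action creates at most one space and 1 suction action is cheaper than 2 pushing actions''), i.e., a preemptive action costs at least $\min_l c_{lp}\ge\Delta c$ while the single freed cell converts at most one suction into a push, saving at most $\Delta c$. Your sandwich wrapper and the delete-and-repair induction are a legitimate, more explicit rendering of that, and in the removal-free regime they go through.

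The genuine gap is your treatment of removals, and specifically the claim that ``no preemptive action can eliminate a forced removal.'' This is contradicted by the paper's own Figure~\ref{fig:suboptimality_of_mip}: preemptively suctioning a non-obstacle (object 12) from a clear-front cell backward into a cavity frees one unit of front space and eliminates a removal that \algabbr's count $b_{lij}$ deems forced --- $b_{lij}$ credits only obstacles pushable sideways into \emph{adjacent} cavities (via $\tilde\delta$), not third-party objects relocated into cavities elsewhere. Nothing in the hypothesis $\Delta c\le\min_l c_{lp}$ rules this out, since it constrains neither $c_r$ nor the shelf geometry. Consequently your fallback also fails: when the deleted preemptive action $\alpha$ was serving to avoid a removal, the repair reinstates a removal costing $c_r$, which is not bounded by $\Delta c$ (and the same problem resurfaces inside the push-to-suction repair when no clear-front destination exists). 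So $\hat C(\mathcal{S})=C(\mathcal{S})$ is simply false for such arrangements, and the pointwise sandwich cannot close; rescuing the theorem in the presence of removals requires a different, global argument (e.g., that for any arrangement exploiting cavity-stashing there is another arrangement, with the stashed object pre-placed in the cavity, whose $\hat C$ is no larger than the first arrangement's true $C$), not a local plan-exchange. To be fair, the paper's own proof is silent on this case as well; but as written, your patch asserts a false statement rather than leaving the case open.
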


\begin{proof}
When $c_{ls} - c_{lp} \equiv \Delta c \text{ }\forall l$ is a constant, cavity locations are irrelevant (\eg Figures \ref{fig:misselection_of_mip}a and \ref{fig:misselection_of_mip}b are equivalent). If $\Delta c \leq \min_l c_{lp}$, preemptive actions to rearrange the space are not beneficial because each preemptive action creates at most one space and 1 suction action is cheaper than 2 pushing actions. Therefore, \algabbr's cost is exact and it will produce an optimal solution.
\end{proof}

\begin{theorem}\label{bound_of_mip_suboptimality_gap}
Let $\mathcal{S^*}$ be an optimal arrangement and $\mathcal{S}_{MIP}$ be an arrangement found by \algabbr. Then
\begin{align}
\mathcal{C(S^*)} &\leq \mathcal{C(S}_{MIP}) \\&\leq \min \left\{\mathcal{\hat C(S}^*_{consol}), \mathcal{\hat C(S^*)} \right\} \\
 & \leq \min \left\{ k\mathcal{C(S^*)}, \mathcal{\hat C(S^*)} \right\},
\end{align}
where $k = \max_l\{\frac{c_{ls}}{c_{lp}}\}$. 
\end{theorem}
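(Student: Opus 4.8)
The plan is to read the displayed chain as four quantities, $C(\mathcal{S}^*)\le C(\mathcal{S}_{MIP})\le \min\{\hat{C}(\mathcal{S}^*_{consol}),\hat{C}(\mathcal{S}^*)\}\le \min\{k\,C(\mathcal{S}^*),\hat{C}(\mathcal{S}^*)\}$, and to establish each inequality separately. The first is immediate: $\mathcal{S}^*$ is by definition a minimizer of $C$, so $C(\mathcal{S}^*)\le C(\mathcal{S})$ for every arrangement $\mathcal{S}$, in particular for $\mathcal{S}_{MIP}$. The last splits into two parts; $\min\{\hat{C}(\mathcal{S}^*_{consol}),\hat{C}(\mathcal{S}^*)\}\le\hat{C}(\mathcal{S}^*)$ is trivial, so the third inequality reduces to the single claim $\hat{C}(\mathcal{S}^*_{consol})\le k\,C(\mathcal{S}^*)$, addressed below. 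Thus the two substantive steps are the middle inequality and this factor-$k$ bound.

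For the middle inequality, let $M(\mathcal{S})$ denote the value of the MIP objective \eqref{mip_obj} when the decision variables are pinned to the values forced by an arrangement $\mathcal{S}$. I need two facts: (i) $M(\mathcal{S})\ge C(\mathcal{S})$ for all $\mathcal{S}$, which is the upper-bound property already asserted when \eqref{mip_obj} was introduced; and (ii) $M(\mathcal{S})=\hat{C}(\mathcal{S})$ for all $\mathcal{S}$, i.e.\ the objective is exactly the optimal non-preemptive retrieval cost. For (ii) I would check that the constraints defining $y_{lij}$ and $b_{lij}$ compute, respectively, the cheapest way to clear the obstacles in front of a target (push each obstacle that has a free sideways neighbor, suction the rest) and the forced number of shelf removals (obstacles in front, minus those pushable into a cavity, counted by $\tilde\delta$, minus the front-empty slots $d_{i'}$ of the other columns), and then that this value is realized by a legal non-preemptive sequence: pushes stay within the target column's rows, and suction placements can always be ordered back-to-front within each receiving column so every placement target has an empty aisle in front of it. Given (i)--(ii), since $\mathcal{S}_{MIP}$ minimizes $M$ and both $\mathcal{S}^*$ and $\mathcal{S}^*_{consol}$ are feasible, $C(\mathcal{S}_{MIP})\le M(\mathcal{S}_{MIP})\le\min\{M(\mathcal{S}^*_{consol}),M(\mathcal{S}^*)\}=\min\{\hat{C}(\mathcal{S}^*_{consol}),\hat{C}(\mathcal{S}^*)\}$.

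For the factor-$k$ bound I argue target by target. Fix a target $l$ with obstacle set $O_l$ (the objects in its column in front of it). Consolidation only slides objects back within their own column, so it preserves within-column order; hence $l$ sits behind exactly the same set $O_l$ in $\mathcal{S}^*_{consol}$, and the number $E_l$ of empty cells outside $l$'s column is unchanged, those empties now all being front-empty. By fact (ii) the MIP retrieval cost of $l$ in $\mathcal{S}^*_{consol}$ is $y_l+c_r\max(0,|O_l|-E_l)$ with $y_l\le\sum_{o\in O_l}c_{os}$. On the other side, any legal sequence retrieving $l$ in $\mathcal{S}^*$ cannot stash objects behind $l$ in its column (suction there is blocked by $l$ itself, and pushes are sideways only), so that column ends with the number of objects it started with; a cell count then forces the sequence to perform at least $\max(0,|O_l|-E_l)$ removals. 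Writing $X$ for the set it removes, its cost is at least $\sum_{o\in O_l\setminus X}c_{op}+\sum_{o\in O_l\cap X}(c_{os}+c_r)+c_r|X\setminus O_l|$ (one action per untouched obstacle, a full suction-plus-penalty per removed object, with no double counting since $O_l\setminus X$ and $X$ are disjoint). Multiplying by $k$ and using $k\,c_{op}\ge c_{os}$, $k\ge 1$, and $|X|\ge\max(0,|O_l|-E_l)$ collapses the right-hand side to at least $\sum_{o\in O_l}c_{os}+k\,c_r|X|\ge y_l+c_r\max(0,|O_l|-E_l)=\hat{\mathrm{cost}}(l^{\mathcal{S}^*_{consol}})$. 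Taking $p_l$-expectations yields $\hat{C}(\mathcal{S}^*_{consol})\le k\,C(\mathcal{S}^*)$.

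I expect the main obstacle to be the bookkeeping in this last step: getting the removal accounting airtight so there is no double counting between ``one cheap action per obstacle'' and ``a removal action per removed object'', and cleanly justifying the lower bound $|X|\ge\max(0,|O_l|-E_l)$ via the ``nothing can be stashed behind the target'' invariant (which also implicitly needs that relocating an $O_l$-object back in front of $l$ never helps). A secondary point requiring care is fact (ii): verifying that the switch-case constraints for $y_{lij}$ and $b_{lij}$ truly equal the optimal non-preemptive cost rather than merely upper-bounding it, since it is this exactness that lets the $\hat{C}(\mathcal{S}^*)$ term---not just $\hat{C}(\mathcal{S}^*_{consol})$---appear in the middle inequality.
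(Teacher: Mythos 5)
Your chain of inequalities and your handling of the first two links match the paper: the first is the definition of optimality of $\mathcal{S}^*$, and the middle one is exactly the paper's observation that \algabbr\ minimizes $\hat C(\cdot)$ over all arrangements (so its value at $\mathcal{S}_{MIP}$ is no larger than at $\mathcal{S}^*$ or $\mathcal{S}^*_{consol}$), combined with $C \leq \hat C$. You are right that this step silently assumes the MIP objective coincides with the optimal non-preemptive cost $\hat C$; the paper asserts this without the verification you flag, so you are, if anything, more careful there. Where you genuinely diverge is the factor-$k$ bound $\hat C(\mathcal{S}^*_{consol}) \leq k\,C(\mathcal{S}^*)$. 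The paper introduces an auxiliary cost $\bar C(\mathcal{S})$ obtained by inflating every pushing cost $c_{lp}$ to $c_{ls}$ (keeping $c_r$), notes that with equal push/suction costs preemption is useless (reusing the Theorem~\ref{correctness_of_mip} argument) and the retrieval cost becomes consolidation-invariant, and chains $\hat C(\mathcal{S}^*_{consol}) \leq \bar C(\mathcal{S}^*_{consol}) = \bar C(\mathcal{S}^*) \leq k\,C(\mathcal{S}^*)$. You instead do a per-target accounting: upper-bound the non-preemptive cost in $\mathcal{S}^*_{consol}$ by $\sum_{o\in O_l}c_{os}+c_r\max(0,|O_l|-E_l)$ using the fact that consolidation preserves both the obstacle set $O_l$ and the off-column empty-cell count $E_l$, and lower-bound any legal sequence in $\mathcal{S}^*$ by one push per untouched obstacle plus $c_{os}+c_r$ per removal, with the cell-count argument $|X|\geq\max(0,|O_l|-E_l)$. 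Your accounting is correct (including the point that removals from other columns free space, which is why the bound must be on the total $|X|$), and it makes explicit the removal bookkeeping and the consolidation-invariance of the required-removal count that the paper's one-line equality $\bar C(\mathcal{S}^*_{consol})=\bar C(\mathcal{S}^*)$ compresses; the price is more case analysis, whereas the paper's $\bar C$ device gets the same bound in three short steps by piggybacking on Theorem~\ref{correctness_of_mip}. One small caution: the inequality $\bar C(\mathcal{S}^*)\leq k\,C(\mathcal{S}^*)$ (and likewise your $k\,c_r|X|\geq c_r|X|$) needs $k\geq 1$, which holds because $c_{ls}\geq c_{lp}$ is assumed throughout; it is worth stating since $c_r$ is not scaled by $k$.
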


\begin{proof}
We always have $\mathcal{C(S^*)} \leq \mathcal{C(S}_{MIP}) \leq \mathcal{\hat C(S}_{MIP}) \leq \min \left\{\mathcal{\hat C(S}^*_{consol}), \mathcal{\hat C(S^*)} \right\}$, where the last inequality follows from \algabbr optimizing $\mathcal{\hat C}(.)$.

To show $\mathcal{\hat C(S}^*_{consol}) \leq k \mathcal{C(S^*)}$, let $\mathcal{\bar{C}(S)}$ be the expected retrieval cost for arrangement $\mathcal{S}$ if we increase $c_{lp}$ to be $c_{ls} ~\forall l$ and keep $c_r$ the same. As argued in Theorem \ref{correctness_of_mip}, when $c_{lp} = c_{ls} ~\forall l$, preeemptive actions have no benefit. Thus, $\mathcal{\hat C(S}^*_{consol}) \leq \mathcal{\bar{C}(S}^*_{consol}) = \mathcal{\bar{C}(S}^*) \leq k \mathcal{C(S^*)}$, where the equality is because when pushing and suction costs are equal, the cost of retrieving each object $l$, $\text{cost}(l^\mathcal{S})$, is simply the cost of relocating (or removing if necessary) all obstacles in front of $l$, which is equal to $\text{cost}(l^{\mathcal{S}_{consol}})$ for all $\mathcal{S}$.
\end{proof}




\section{\algabbr Planning Experiments}\label{sec:sim_exp}

We conduct simulated experiments to evaluate the benefit of \algabbr arrangements, as compared to random arrangements or arrangements generated by a priority-greedy algorithm while varying shelf size ($m_x, m_y$), density $\rho$, ratio of suction cost to pushing cost $\psi$, and removal penalty $c_r$. Additionally, we evaluate \algabbr's ability to generate arrangements for the sequential retrieval task. We use a Gurobi solver~\cite{gurobi} for all of our experiments. For an analysis of \algabbr runtime, please refer to the Appendix.

We use 5 shelf sizes $(m_x, m_y) \in $ $\lbrace (3,3),$ $(4,4),$ $(5,5),$ $(6,6),$ $(7,7)\rbrace$, 5 density values $\rho \in \{0.3, 0.5, 0.7, 0.8, 0.9\}$, 3 cost ratios $\psi \in \lbrace 1.0, 1.3, 2.0 \rbrace$, and 3 removal penalties $c_r \in \lbrace 0, 10, 100\rbrace$. We sample 3 pushing costs $c_{lp}$ for each object from a discrete uniform distribution on $\{1, \, \ldots, \, 10\}$ and let retrieval frequency be inversely proportional to priority (\ie the probability of being retrieved is $n:(n-1): ... :2:1$ for $n$ objects). We compare \algabbr to two baseline arrangement algorithms on each of the $5 \times 5 \times 3 \times 3 \times 3 = 675$ object configurations: 
\begin{enumerate}
\item \textbf{Random}: objects are assigned to a position on the shelf drawn uniformly at random.
\item \textbf{Priority-Greedy}: we select $n$ positions on the shelf uniformly at random, sort the positions by their depth in the shelf, then assign the $n$ objects to the positions in the order of priority (\ie the highest priority object is assigned to the frontmost position). We use random positions instead of placing all the objects at the front or back of the shelf because the latter approach will not leave space between objects for pushing actions.
\end{enumerate}

To evaluate the true expected cost $C(\mathcal{S})$ for each arrangement, we use an A* search algorithm to find an optimal action sequence $\mathbf{A}_l^\mathcal{S}$ for retrieving each object and compute the corresponding optimal cost $\text{cost}(l^\mathcal{S})$, weighted by the probability $p_l$. Preemptive actions are considered: at each step, pushing, suction, and removal actions can be performed on the frontmost object in each column. We use the sum of pushing costs of all obstacles in front of the target object as an admissible heuristic since it is a lower bound on the retrieval cost. However, A* search becomes intractable in relatively dense ($\rho \geq 0.7)$ shelves for target objects at the back both due to the large branching factor and the large number of actions needed to retrieve the object. We switch to a non-preemptive retrieval policy that computes $\hat{\text{cost}}(l^\mathcal{S})$ when A* search does not find a solution within 1 minute.

Figure~\ref{fig:mip_hyperparameters} plots the cost of \algabbr and Priority-Greedy arrangements as a percentage of Random arrangements for each setting. Since retrieval cost increases as the shelf becomes denser and the number of objects, the suction cost, or the removal penalty increases, we normalize the average cost of \algabbr and Priority-Greedy arrangements by that of the Random arrangement. We can see that the expected retrieval cost for \algabbr arrangements are 20-40\% of the cost of Random arrangements, while Priority-Greedy arrangements have 60-80\% of the cost of Random arrangements. As the density increases and the shelf becomes larger, the relative benefit of \algabbr decreases slightly but remains nearly half the cost of the Priority-Greedy arrangements. The effect of $\psi$ and $c_r$ on the relative cost is small, although \algabbr performs relatively better when $c_r$ is large because it directly optimizes to avoid removal, unlike Priority-Greedy and Random.

We also evaluate the cumulative cost for sequential object retrieval. We iteratively sample target objects according to the retrieval distribution normalized by the current objects, execute the retrieval, and repeat until all objects have been retrieved or removed. Figure~\ref{fig:mip_hyperparameters} shows the results, which suggest that \algabbr has lower cumulative cost, especially for less-dense shelves. As more objects are retrieved and the initial arrangement is disturbed, the relative benefit of \algabbr decreases since it optimizes for a single retrieval.
\section{\algabbr for Mechanical Search}
\newcommand{\SR}{Success Rate}
\newcommand{\SA}{Mean $\pm$ Std Steps}
\newcommand{\SC}{Mean $\pm$ Std Cost}

\newcommand{\sr}[1]{\ifthenelse{\equal{#1}{*}}{\srStar}{\srNoStar{#1}}}
\newcommand{\srStar}[1]{\textbf{\srNoStar{#1}}}
\newcommand{\srNoStar}[1]{#1\,\%}

\newcommand{\srp}[1]{\ifthenelse{\equal{#1}{*}}{\srpStar}{\srpNoStar{#1}}}
\newcommand{\srpStar}[2]{\textbf{\srpNoStar{#1}{#2}}}
\newcommand{\srpNoStar}[2]{#1 $\pm$ #2\,\%}

\newcommand{\sa}[1]{\ifthenelse{\equal{#1}{*}}{\saStar}{\saNoStar{#1}}}
\newcommand{\saStar}[2]{\textbf{\saNoStar{#1}{#2}}}
\newcommand{\saNoStar}[2]{#1 $\pm$ #2}

\newcommand{\scos}[1]{\ifthenelse{\equal{#1}{*}}{\scStar}{\scNoStar{#1}}}
\newcommand{\scStar}[2]{\textbf{\scNoStar{#1}{#2}}}
\newcommand{\scNoStar}[2]{#1 $\pm$ #2}

\newcommand{\NTT}{\% Hidden Objects}
\newcommand{\TTT}{\% Visible Objects}
\newcommand{\SNT}{\% Success Hidden}
\newcommand{\SAT}{Mean Steps Hidden}
\newcommand{\SCT}{Mean Cost Hidden}

\begin{table}[t]
\vspace{4pt}
\centering
\resizebox{\columnwidth}{!}{
\begin{tabular}{@{}ccccc@{}}\toprule
No. & Metric & Random  & Priority-Greedy & \algabbr \\

\midrule
\multirow{3}*{6} 
  & \TTT & \srp{61}{15}  & \srp{75}{12}      & \srp*{93}{7}    \\
  & \SNT & \srp*{100}{0}  & \srp*{100}{0}      & \srp*{100}{0}    \\
  & \SAT & \sa{1.3}{0.5} & \sa{1.2}{0.2} & \sa*{1.1}{0.3}\\
  & \SCT & \scos{10.8}{4.1} & \scos{10.0}{2.3} & \scos*{6.9}{2.9}\\
\cmidrule{1-5}
\multirow{3}*{8} 
  & \TTT & \srp{56}{13}  & \srp{71}{9}      & \srp*{81}{7}    \\
  & \SNT & \srp{99}{3}  & \srp*{100}{0}      & \srp*{100}{0}    \\
  & \SAT & \sa{1.5}{0.8} & \sa{1.3}{0.4} & \sa*{1.2}{0.5}\\
  & \SCT & \scos{12.5}{7.3} & \scos{11.5}{4.7} & \scos*{8.1}{2.7}\\
\cmidrule{1-5}
\multirow{3}*{10} 
  & \TTT & \srp{51}{11}  & \srp{67}{7}      & \srp*{73}{6}    \\
  & \SNT & \srp{97}{15}  & \srp{97}{15}      & \srp*{100}{0}    \\
  & \SAT & \sa{1.8}{0.7} & \sa{1.4}{0.4} & \sa*{1.1}{0.4}\\
  & \SCT & \scos{15.5}{6.4} & \scos{11.5}{4.1} & \scos*{8.7}{2.9}\\
\cmidrule{1-5}
\multirow{3}*{12} 
  & \TTT & \srp{41}{8}  & \srp{60}{6}      & \srp*{63}{5}    \\
  & \SNT & \srp{85}{33}  & \srp{97}{7}      & \srp*{100}{0}    \\
  & \SAT & \sa{1.7}{0.9} & \sa{1.5}{0.5} & \sa*{1.4}{0.4}\\
  & \SCT & \scos{14.8}{7.7} & \scos{13.1}{4.0} & \scos*{10.6}{2.7}\\
\cmidrule{1-5}
\multirow{3}*{14} 
  & \TTT & \srp{36}{7}  & \srp{53}{5}      & \srp*{55}{7}    \\
  & \SNT & \srp{71}{39}  & \srp{77}{35}      & \srp*{100}{0}    \\
  & \SAT & \sa{1.7}{1.2} & \sa*{1.3}{0.7} & \sa{1.7}{0.4}\\
  & \SCT & \scos{15.2}{10.7} & \scos*{11.5}{6.6} & \scos{13.4}{3.2}\\
\cmidrule{1-5}
\multirow{3}*{16} 
  & \TTT & \srp{31}{8}  & \srp*{51}{5}      & \srp*{51}{5}    \\
  & \SNT & \srp{56}{37}  & \srp{59}{39}      & \srp*{99}{3}    \\
  & \SAT & \sa{1.6}{1.3} & \sa*{1.1}{0.9} & \sa{1.8}{0.4}\\
  & \SCT & \scos{14.1}{12.1} & \scos*{10.0}{8.1} & \scos{14.2}{3.7}\\
\bottomrule
\end{tabular}}
\caption{\textbf{Mechanical Search in Sim.} Simulation results by SLAX-RAY over 900 trials on a $5\times4$ discretized shelf across 180 total object configurations. All averages are weighted by the retrieval probabilities of each object. Mean steps and cost are computed among successful searches only. \algabbr provides arrangements that lead to the highest proportion of requested targets being visible (i.e., directly retrievable) and achieves the highest search success rate when the target is hidden. The smaller mean steps and cost for baselines in higher density shelves are because SLAX-RAY is only able to find objects in the front rows of those arrangements while it can find objects at the back for \algabbr.}
\label{tab:results_slaxray_sim_54}
\vspace{-16pt}
\end{table}
While OSA assumes full observability and optimizes for expected retrieval cost, we hypothesize that an optimal arrangement can also benefit lateral-access mechanical search, where a known target object must be searched from a partially observed shelf of unknown objects. We evaluate Random, Priority-Greedy, and \algabbr arrangements for mechanical search using SLAX-RAY~\cite{huang2022mechanical}. SLAX-RAY calculates the target occupancy distribution, which indicates the probability of each object occluding the target object, then greedily reduces the occupancy distribution support through pushing and suction actions using a bluction tool. In contrast to the action set in Section~\ref{action}, removal actions are not considered by SLAX-RAY and objects cannot be moved behind other objects, so we set $c_r$ to a large value in \algabbr to discourage removal actions. We use a shelf size of \SI{0.8x0.5x0.5}{\meter} in both simulation and physical experiments. A trial is successful when the target is retrievable.

\subsection{Simulation Experiments}
We conduct experiments in simulation using the First Order Shelf Simulator (FOSS) simulator~\cite{huang2020mechanical}. We discretize the shelf into a $5\times4$ grid with 6, 8, 10, 12, 14 or 16 cuboid objects, with each setting repeated 50 times for each arrangement. The objects are generated with random dimensions within the limit of the grid size, and are assigned retrieval probabilities $p_l$ drawn from a standard uniform and normalized. As heavier objects tend to be more difficult to move, we set the pushing cost of each object to be proportional to its volume and use a cost ratio of $\psi=1.3$. We evaluate each arrangement based on the number of steps, success rate, and retrieval cost SLAX-RAY produces for each object, weighted by $p_l$. SLAX-RAY fails if it takes $2n$ actions without revealing the target object or if there are no allowable actions (e.g., all cells in the front row are occupied). 

Table~\ref{tab:results_slaxray_sim_54} shows the results. The mean steps and cost are computed only among successful searches. ``Visible objects'' refer to the case where the target requested is already placed at the front of a column and so is directly retrievable without any steps of search. We see that \algabbr arrangements not only have the largest proportion of time when the requested target is readily retrievable, but the search success rate among hidden objects remains 99-100\% even for relatively dense shelves, and is 40-43\% higher than baselines for 16 objects. The smaller mean steps and cost for baselines in dense shelves are because SLAX-RAY can only successfully find objects in the front rows which has fewer steps, while \algabbr leaves sufficient space for SLAX-RAY to manipulate and find deeply-hidden objects at the back so its weighted average steps and cost among all successful trials are larger. 


\subsection{Physical Experiments}
We use a physical Fetch robot with a ``bluction'' tool~\cite{huang2022mechanical} and an Intel RealSense LiDAR Camera L515 for RGBD observations. We discretize the shelf into a $4\times3$ grid, select a set of 4, 6, or 8 household objects (shown in Figure~\ref{fig:bluction_splash}), and assign them a priority order and cost based on their weight. We run the same experiments for each object set as in simulation. Table~\ref{tab:results_physical_slaxray} shows the results. As the shelf is not dense, SLAX-RAY successfully finds all targets in all arrangements so there is no bias in the means due to the exclusion of failures as in Table~\ref{tab:results_slaxray_sim_54}. We see that \algabbr achieves 2x fewer steps and 3x lower cost to find hidden objects on average, suggesting that mechanical search policies can benefit from optimally arranged objects. A direct comparison with simulation, where the shelf is discretized into a $4\times3$ grid, is included in the Appendix.

\newcommand{\NT}{\% Hidden Objects}
\newcommand{\TT}{\% Visible Objects}
\newcommand{\SAP}{Mean Steps Hidden}
\newcommand{\SCP}{Mean Cost Hidden}

\begin{table}[t]
\vspace{4pt}
\centering
\begin{tabular}{@{}ccccc@{}}\toprule
No. & Metric & Random  & Priority-Greedy & \algabbr \\

\midrule
\multirow{3}*{4} 
  & \TT & \sr{63.2}  & \sr{78.9}      & \sr*{100}    \\
  & \SAP & 1.0 & 1.0 & \bf{0.0} \\
  & \SCP & 7.0 & 10.0 & \bf{0.0} \\
\cmidrule{1-5}
\multirow{3}*{6} 
  & \TT & \sr{77.8}  & \sr{55.6}      & \sr*{81.5}    \\
  & \SAP & 4.7 & 2.6 & \bf{2.2}\\
  & \SCP & 37.7 & 20.1 & \bf{11.2}\\
\cmidrule{1-5}
\multirow{3}*{8} 
  & \TT & \sr{50.0}  & \sr*{69.4}      & \sr*{69.4}    \\
  & \SAP & 5.1 & 3.1 & \bf{2.4}\\
  & \SCP & 29.2 & 22.6 & \bf{11.1}\\

\bottomrule
\end{tabular}
\caption{\textbf{Mechanical Search in Real.} Results by SLAX-RAY over 54 trials with a Fetch robot on a physical shelf discretized into a $4\times3$ grid. As the shelf is not dense, SLAX-RAY finds all targets successfully in all arrangements. All averages are weighted by the objects' retrieval probabilities. \algabbr produces arrangements for which SLAX-RAY can find the target in 2x fewer steps and 3x lower cost than baselines, suggesting that mechanical search policies can benefit from optimally arranging objects for retrieval.}
\label{tab:results_physical_slaxray}
\vspace{-12pt}
\end{table}
\section{Conclusion and Future Work}
In this paper, we formalize the Optimal Shelf Arrangement (OSA) problem, and propose \algabbr, a mixed-integer program that efficiently optimizes an upper bound of the expected retrieval cost and solves for a near-optimal arrangement. Experiments suggest that optimally arranged shelves achieve significant cost savings compared to randomly and greedily arranged shelves for both object retrieval and search. In future work, we will consider the problem of rearranging the current shelf into a near-optimal arrangement; given an arrangement, instead of completely rearranging the shelf to place the objects in an optimal arrangement, it may be desirable to find a near-optimal arrangement that also minimizes disturbance from the current configuration.
\section*{Acknowledgements}
{\footnotesize
This research was performed at the AUTOLAB at UC Berkeley in affiliation with the Berkeley AI Research (BAIR) Lab, and the CITRIS ``People and Robots'' (CPAR) Initiative. The authors were supported in part by donations from Google, Siemens, Autodesk, Bosch, Toyota Research Institute, Autodesk, Honda, Intel, Hewlett-Packard and by equipment grants from PhotoNeo, NVIDIA, and Intuitive Surgical.}

{\footnotesize
\bibliographystyle{IEEEtran}
\bibliography{references}

\begin{thebibliography}{10}
\providecommand{\url}[1]{#1}
\csname url@rmstyle\endcsname
\providecommand{\newblock}{\relax}
\providecommand{\bibinfo}[2]{#2}
\providecommand\BIBentrySTDinterwordspacing{\spaceskip=0pt\relax}
\providecommand\BIBentryALTinterwordstretchfactor{4}
\providecommand\BIBentryALTinterwordspacing{\spaceskip=\fontdimen2\font plus
\BIBentryALTinterwordstretchfactor\fontdimen3\font minus
  \fontdimen4\font\relax}
\providecommand\BIBforeignlanguage[2]{{%
\expandafter\ifx\csname l@#1\endcsname\relax
\typeout{** WARNING: IEEEtran.bst: No hyphenation pattern has been}%
\typeout{** loaded for the language `#1'. Using the pattern for}%
\typeout{** the default language instead.}%
\else
\language=\csname l@#1\endcsname
\fi
#2}}

\bibitem{krontiris2015dealing}
A.~Krontiris and K.~E. Bekris, ``Dealing with difficult instances of object
  rearrangement.'' in \emph{Proc. Robotics: Science and Systems (RSS)}, vol.
  1123, 2015.

\bibitem{nam2020fast}
C.~Nam, J.~Lee, S.~H. Cheong, B.~Y. Cho, and C.~Kim, ``Fast and resilient
  manipulation planning for target retrieval in clutter,'' in \emph{{Proc.
  {IEEE} Int. Conf. Robotics and Automation (ICRA)}}, 2020, pp. 3777--3783.

\bibitem{wang2021uniform}
R.~Wang, K.~Gao, D.~Nakhimovich, J.~Yu, and K.~E. Bekris, ``Uniform object
  rearrangement: From complete monotone primitives to efficient non-monotone
  informed search,'' in \emph{{Proc. {IEEE} Int. Conf. Robotics and Automation
  (ICRA)}}, 2021, pp. 6621--6627.

\bibitem{cheong2020relocate}
S.~H. Cheong, B.~Y. Cho, J.~Lee, C.~Kim, and C.~Nam, ``Where to relocate?:
  Object rearrangement inside cluttered and confined environments for robotic
  manipulation,'' in \emph{{Proc. {IEEE} Int. Conf. Robotics and Automation
  (ICRA)}}, 2020, pp. 7791--7797.

\bibitem{huang2022mechanical}
H.~Huang, M.~Danielczuk, C.~M. Kim, L.~Fu, Z.~Tam, J.~Ichnowski, A.~Angelova,
  B.~Ichter, and K.~Goldberg, ``Mechanical search on shelves using a novel
  ``bluction" tool,'' in \emph{{Proc. {IEEE} Int. Conf. Robotics and Automation
  (ICRA)}}, 2022.

\bibitem{han2018efficient}
S.~D. Han, N.~M. Stiffler, K.~E. Bekris, and J.~Yu, ``Efficient, high-quality
  stack rearrangement,'' \emph{IEEE Robotics \& Automation Letters}, vol.~3,
  no.~3, pp. 1608--1615, 2018.

\bibitem{huang2020mechanical}
H.~Huang, M.~Dominguez-Kuhne, V.~Satish, M.~Danielczuk, K.~Sanders,
  J.~Ichnowski, A.~Lee, A.~Angelova, V.~Vanhoucke, and K.~Goldberg,
  ``Mechanical search on shelves using lateral access x-ray,'' in \emph{Proc.
  IEEE/RSJ Int. Conf. on Intelligent Robots and Systems (IROS)}, 2021, pp.
  2045--2052.

\bibitem{karasek2013overview}
J.~Kar{\'a}sek, ``An overview of warehouse optimization,'' \emph{Int. Journal
  of Advances in Telecommunications, Electrotechnics, Signals and Systems},
  vol.~2, no.~3, pp. 111--117, 2013.

\bibitem{onut2008particle}
S.~{\"O}n{\"u}t, U.~R. Tuzkaya, and B.~Do{\u{g}}a{\c{c}}, ``A particle swarm
  optimization algorithm for the multiple-level warehouse layout design
  problem,'' \emph{Computers \& Industrial Engineering}, vol.~54, no.~4, pp.
  783--799, 2008.

\bibitem{hausman1976optimal}
W.~H. Hausman, L.~B. Schwarz, and S.~C. Graves, ``Optimal storage assignment in
  automatic warehousing systems,'' \emph{Management science}, vol.~22, no.~6,
  pp. 629--638, 1976.

\bibitem{thonemann1998note}
U.~W. Thonemann and M.~L. Brandeau, ``Note. optimal storage assignment policies
  for automated storage and retrieval systems with stochastic demands,''
  \emph{Management Science}, vol.~44, no.~1, pp. 142--148, 1998.

\bibitem{johnson1996stochastic}
M.~E. Johnson and M.~L. Brandeau, ``Stochastic modeling for automated material
  handling system design and control,'' \emph{Transportation science}, vol.~30,
  no.~4, pp. 330--350, 1996.

\bibitem{pereira2017solving}
A.~G. Pereira, L.~S. Buriol, and M.~Ritt, ``Solving moving-blocks problems,''
  in \emph{Anais do XXX Concurso de Teses e Disserta{\c{c}}{\~o}es}, 2017.

\bibitem{flake2002rush}
G.~W. Flake and E.~B. Baum, ``Rush hour is pspace-complete, or “why you
  should generously tip parking lot attendants”,'' \emph{Theoretical Computer
  Science}, vol. 270, no. 1-2, pp. 895--911, 2002.

\bibitem{papadimitriou1994motion}
C.~H. Papadimitriou, P.~Raghavan, M.~Sudan, and H.~Tamaki, ``Motion planning on
  a graph,'' in \emph{Proc. Symposium on Foundations of Computer Science},
  1994, pp. 511--520.

\bibitem{yalcin2018multi}
A.~Yalcin, ``Multi-agent route planning in grid-based storage systems,'' Ph.D.
  dissertation, Europa-Universit{\"a}t Viadrina Frankfurt, 2018.

\bibitem{gue2007puzzle}
K.~R. Gue and B.~S. Kim, ``Puzzle-based storage systems,'' \emph{Naval Research
  Logistics (NRL)}, vol.~54, no.~5, pp. 556--567, 2007.

\bibitem{kota2015retrieval}
V.~R. Kota, D.~Taylor, and K.~R. Gue, ``Retrieval time performance in
  puzzle-based storage systems,'' \emph{Journal of Manufacturing Technology
  Management}, 2015.

\bibitem{gue2013gridstore}
K.~R. Gue, K.~Furmans, Z.~Seibold, and O.~Uluda{\u{g}}, ``Gridstore: a
  puzzle-based storage system with decentralized control,'' \emph{IEEE Trans.
  Automation Science and Engineering}, vol.~11, no.~2, pp. 429--438, 2013.

\bibitem{mirzaei2017modelling}
M.~Mirzaei, R.~B. De~Koster, and N.~Zaerpour, ``Modelling load retrievals in
  puzzle-based storage systems,'' \emph{Int. Journal of Production Research},
  vol.~55, no.~21, pp. 6423--6435, 2017.

\bibitem{ratner1986finding}
D.~Ratner and M.~K. Warmuth, ``Finding a shortest solution for the n$\times$ n
  extension of the 15-puzzle is intractable.'' in \emph{Association for the
  Advancement of Artificial Intelligence (AAAI)}, 1986, pp. 168--172.

\bibitem{salamy2012effective}
H.~Salamy and J.~Ramanujam, ``An effective solution to task scheduling and
  memory partitioning for multiprocessor system-on-chip,'' \emph{IEEE Trans.
  Computer-Aided Design of Integrated Circuits and Systems}, vol.~31, no.~5,
  pp. 717--725, 2012.

\bibitem{niemann1996hardware}
R.~Niemann and P.~Marwedel, ``Hardware/software partitioning using integer
  programming,'' in \emph{Design, Automation and Test in Europe (DATE)
  Conference}, 1996, pp. 473--479.

\bibitem{kuang2005partitioning}
S.-R. Kuang, C.-Y. Chen, and R.-Z. Liao, ``Partitioning and pipelined
  scheduling of embedded system using integer linear programming,'' in
  \emph{Int. Conf. on Parallel and Distributed Systems (ICPADS)}, vol.~2, 2005,
  pp. 37--41.

\bibitem{avissar2002optimal}
O.~Avissar, R.~Barua, and D.~Stewart, ``An optimal memory allocation scheme for
  scratch-pad-based embedded systems,'' \emph{ACM Trans. Embedded Computing
  Systems (TECS)}, vol.~1, no.~1, pp. 6--26, 2002.

\bibitem{ozturk2006integer}
O.~Ozturk, G.~Chen, M.~Kandemir, and M.~Karakoy, ``An integer linear
  programming based approach to simultaneous memory space partitioning and data
  allocation for chip multiprocessors,'' in \emph{IEEE Computer Society Annual
  Symposium on Emerging VLSI Technologies and Architectures (ISVLSI)}, 2006,
  pp. 6--pp.

\bibitem{stilman2007manipulation}
M.~Stilman, J.-U. Schamburek, J.~Kuffner, and T.~Asfour, ``Manipulation
  planning among movable obstacles,'' in \emph{{Proc. {IEEE} Int. Conf.
  Robotics and Automation (ICRA)}}, 2007, pp. 3327--3332.

\bibitem{stilman2005navigation}
M.~Stilman and J.~J. Kuffner, ``Navigation among movable obstacles: Real-time
  reasoning in complex environments,'' \emph{Int. Journal of Humanoid
  Robotics}, vol.~2, no.~04, pp. 479--503, 2005.

\bibitem{han2017high}
S.~D. Han, N.~M. Stiffler, A.~Krontiris, K.~E. Bekris, and J.~Yu,
  ``High-quality tabletop rearrangement with overhand grasps: Hardness results
  and fast methods,'' in \emph{Proc. Robotics: Science and Systems (RSS)},
  2017.

\bibitem{shome2018fast}
R.~Shome, K.~Solovey, J.~Yu, K.~Bekris, and D.~Halperin, ``Fast, high-quality
  two-arm rearrangement in synchronous, monotone tabletop setups,'' \emph{IEEE
  Trans. Automation Science and Engineering}, vol.~18, no.~3, pp. 888--901,
  2021.

\bibitem{shome2020synchronized}
R.~Shome and K.~E. Bekris, ``Synchronized multi-arm rearrangement guided by
  mode graphs with capacity constraints,'' in \emph{Workshop on the Algorithmic
  Foundation of Robotics (WAFR)}, 2020, pp. 243--260.

\bibitem{cosgun2011push}
A.~Cosgun, T.~Hermans, V.~Emeli, and M.~Stilman, ``Push planning for object
  placement on cluttered table surfaces,'' in \emph{Proc. IEEE/RSJ Int. Conf.
  on Intelligent Robots and Systems (IROS)}, 2011, pp. 4627--4632.

\bibitem{havur2014geometric}
G.~Havur, G.~Ozbilgin, E.~Erdem, and V.~Patoglu, ``Geometric rearrangement of
  multiple movable objects on cluttered surfaces: A hybrid reasoning
  approach,'' in \emph{{Proc. {IEEE} Int. Conf. Robotics and Automation
  (ICRA)}}, 2014, pp. 445--452.

\bibitem{dabbour2019placement}
A.~R. Dabbour, ``Placement generation and hybrid planning for robotic
  rearrangement on cluttered surfaces,'' Ph.D. dissertation, Sabanc{\i}
  University, 2019.

\bibitem{danielczuk2019mechanical}
M.~Danielczuk, A.~Kurenkov, A.~Balakrishna, M.~Matl, D.~Wang,
  R.~Mart{\'\i}n-Mart{\'\i}n, A.~Garg, S.~Savarese, and K.~Goldberg,
  ``Mechanical search: Multi-step retrieval of a target object occluded by
  clutter,'' in \emph{{Proc. {IEEE} Int. Conf. Robotics and Automation
  (ICRA)}}, 2019, pp. 1614--1621.

\bibitem{gupta2013interactive}
M.~Gupta, T.~R{\"u}hr, M.~Beetz, and G.~S. Sukhatme, ``Interactive environment
  exploration in clutter,'' in \emph{Proc. IEEE/RSJ Int. Conf. on Intelligent
  Robots and Systems (IROS)}, 2013, pp. 5265--5272.

\bibitem{dogar2014object}
M.~R. Dogar, M.~C. Koval, A.~Tallavajhula, and S.~S. Srinivasa, ``Object search
  by manipulation,'' \emph{Autonomous Robots}, vol.~36, no.~1, pp. 153--167,
  2014.

\bibitem{lin2015planning}
Y.-C. Lin, S.-T. Wei, S.-A. Yang, and L.-C. Fu, ``Planning on searching
  occluded target object with a mobile robot manipulator,'' in \emph{{Proc.
  {IEEE} Int. Conf. Robotics and Automation (ICRA)}}, 2015, pp. 3110--3115.

\bibitem{gurobi}
\BIBentryALTinterwordspacing
{Gurobi Optimization, LLC}, ``{Gurobi Optimizer Reference Manual},'' 2022.
  [Online]. Available: \url{https://www.gurobi.com}
\BIBentrySTDinterwordspacing

\end{thebibliography}
}

\clearpage
\section{Appendix}

\subsection{Fully-Linearized MIP Formulation for OSA}
In this section, we give the linearized formulation of the proposed MIP:

\begin{gather}
    \text {Minimize } \sum_{l=1}^{n} \sum_{i=1}^{m_x} \sum_{j=1}^{m_y} p_{l} (y_{l i j}+c_r b_{l i j}) \label{appendix_mip_obj_combined}\\
    \text{Subject to } \sum_{i=1}^{m_x} \sum_{j=1}^{m_y} x_{l i j}=1, \quad \forall l \label{appendix_mip_object}\\
    a_{i j} = \sum_{l=1}^{n} x_{l i j} \leq 1, \quad \forall i, j \label{appendix_mip_position}\\
    \delta_{i j} \leq M \left(2-(a_{i-1, j} + a_{i+1, j}) \right) \quad \forall i, j  \label{appendix_mip_pushing_indicator}\\
    \delta^p_{l i j} \leq x_{l i j} \quad \forall l, i, j  \label{appendix_mip_pushing_indicator_object_level_1}\\
    \delta^p_{l i j} \leq \delta_{i j} \quad \forall l, i, j  \label{appendix_mip_pushing_indicator_object_level_2}\\
    \delta^p_{l i j} \geq x_{l i j} + \delta_{i j} - 1 \quad \forall l, i, j  \label{appendix_mip_pushing_indicator_object_level_3}\\
    \delta^s_{l i j} \leq x_{l i j} \quad \forall l, i, j  \label{appendix_mip_suctioning_indicator_object_level_1}\\
    \delta^s_{l i j} \leq 1 - \delta_{l i j} \quad \forall l, i, j  \label{appendix_mip_suctioning_indicator_object_level_2}\\
    \delta^s_{l i j} \geq x_{l i j} - \delta_{l i j} \quad \forall l, i, j  \label{appendix_mip_suctioning_indicator_object_level_3}\\
    \sum_{l' \neq l} \sum_{j^{\prime}=1}^{j-1} \left[c_{l' p} \delta^p_{l' i j'} + c_{l' s} \delta^s_{l' i j'}\right] - y_{l i j} \leq M z_{l i j} \quad \forall l, i, j \label{appendix_mip_retrieval}\\
    x_{l i j} \leq 1 - z_{l i j}, \quad \forall i, j \label{auxiliary_x_negation}\\
    e_{i j} = \sum_{j^{\prime}=1}^{j} a_{i j'}, \quad \forall i, j \label{appendix_mip_no_obj_in_front}\\
    e_{i j} \leq M (1-f_{i j}) \quad \forall i, j \label{appendix_mip_empty_aisle}\\
    d_i = \sum_{j=1}^{m_y} f_{i j}, \quad \forall i \label{appendix_mip_depth_empty_space}\\
    \tilde \delta_{i j} \leq a_{i j}, \quad \forall i, j \label{appendix_mip_hiding_indicator_1}\\
    \tilde \delta_{i j} \leq g_{i j}, \quad \forall i, j \label{appendix_mip_hiding_indicator_2}\\
    \tilde \delta_{i j} \geq a_{i j} + g_{i j} - 1, \quad \forall i, j \label{appendix_mip_hiding_indicator_3}\\
    1 - h_{i j} \geq a_{i j}, \quad \forall i, j \label{auxiliary_h_1}\\
    1 - h_{i j} \geq f_{i j}, \quad \forall i, j \label{auxiliary_h_2}\\
    1 - h_{i j} \leq a_{i j} + f_{i j}, \quad \forall i, j \label{auxiliary_h_3}\\
    g_{i j} \geq h_{i-1, j}, \quad \forall i, j \label{auxiliary_g_1}\\
    g_{i j} \geq h_{i+1, j}, \quad \forall i, j \label{auxiliary_g_2}\\
    g_{i j} \leq h_{i-1, j} + h_{i+1, j}, \quad \forall i, j \label{auxiliary_g_3}\\
    \bigg(e_{i j} - x_{l i j} - \sum_{j^{\prime}=1}^{j-1} \tilde \delta_{i j'} - \sum_{i^{\prime} \neq i} d_{i'}\bigg) - b_{l i j} \leq M z_{l i j} \quad \forall l, i, j \label{appendix_mip_number_removal}\\
    x_{l i j}, z_{l i j}, f_{i j}, g_{i j}, h_{i j}, \delta_{i j}, \delta^p_{l i j}, \delta^s_{l i j}, \tilde \delta_{i j} \in\{0,1\},  \quad \forall l, i, j, \label{appendix_mip_domain1}\\
    y_{l i j}  \in \R_{\geq 0}, a_{i j}, b_{l i j}, d_i, e_{i j} \in \Z_{\geq 0}, \quad \forall l, i, j. \label{appendix_mip_domain2}
\end{gather}

\subsection{Analysis of Suboptimality for Figures~\ref{fig:misselection_of_mip} and \ref{fig:suboptimality_of_mip}}
In this section, we give a detailed analysis of the condition for the two examples in Section~\ref{sec:optimality_analysis} where \algabbr may find a suboptimal solution.

\subsubsection{Figure~\ref{fig:misselection_of_mip}}

Comparing arrangement (2b) and (2a), the difference lies in the retrieval costs for objects 5 and 6. For retrieving object 5, arrangement (2b) costs $\min(c_{4s}, c_{4p}+c_{7s})$ while arrangement (2a) costs only $c_{4p}$. For retrieving object 6, arrangement (2b) costs $\min(c_{4s}+c_{5p}, c_{7s}+c_{4p}+c_{5p})$ while arrangement (2a) costs $\min(c_{4p}+c_{5s}, c_{7s}+c_{4p}+c_{5p})$. If $\Delta c_4 \leq c_{7s}$ and $\Delta c_5 \geq c_{7s}$, then arrangement (2b) costs $\Delta c_4$ less for retrieving object 5 but $c_{7s} - \Delta c_4$ more for retrieving object 6 compared to arrangement (2a). Considering the retrieval probabilities, if $p_5 \Delta c_4 > p_6 (c_{7s} - \Delta c_4)$, then arrangement (2b) is more optimal than arrangement (2a).

However, as the \algabbr does not consider the possibility of preemptive actions, it overestimates the retrieval costs for arrangement (2b). In particular, for retrieving object 5, it will conclude that arrangement (2b) saves $\Delta c_4$, and for retrieving object 6, it will conclude that arrangement (2b) costs $(c_{4p} + c_{5s}) - (c_{4s} + c_{5p}) = \Delta c_5 - \Delta c_4$ more. Now if $p_6 (\Delta c_5 - \Delta c_4) > p_5 \Delta c_4 > p_6 (c_{7s} - \Delta c_4)$, \algabbr will choose arrangement (2a) over arrangement (2a) even though arrangement (2b) is actually more optimal if preemptive actions are allowed.

\subsubsection{Figure~\ref{fig:suboptimality_of_mip}}


Let the configuration in Figure 3a be $\mathcal{S}_A$, and the configuration in Figure 3b be $\mathcal{S}_B$. Then we have
\begin{equation}
\begin{aligned}
  &  \mathcal{C(S}_A) - \mathcal{C(S}_B)  &\\
   =& (p_4 + p_7) (-\min \left\{c_{12,s}, c_r \right\}) & \\
                                        &+ p_{10} (c_{9p} - \min \left\{c_{9s}, c_{9p}+c_{12,s} \right\}) & \\
                                        &+ p_{11} \big(\min \left\{c_{9p}+c_{10,s}, c_{9p}+c_{10,p}+c_{12,s} \right\} & \\
                                        &~~~~- \min \left\{c_{9s}+c_{10,p}, c_{9p}+c_{10,p}+c_{12,s} \right\} \big) &
\end{aligned}
\end{equation}
and
\begin{equation}
\begin{aligned}
&  \mathcal{\hat{C}(S}_A) - \mathcal{\hat{C}(S}_B) & \hspace{-2em}\hphantom{\text{Placeholder}}\\
                                        =& (p_4 + p_7) (-c_r) + p_{10} (c_{9p} - c_{9s}) &\\
                                        &+ p_{11} ((c_{9p}+c_{10,s}) - (c_{9s}+c_{10,p})). &
\end{aligned}
\end{equation}

When $\mathcal{C(S}_A) > \mathcal{C(S}_B)$ but $\mathcal{\hat{C}(S}_A) < \mathcal{\hat{C}(S}_B)$, the MIP will choose $\mathcal{S}_A$ over $\mathcal{S}_B$ while $\mathcal{S}_B$ is more optimal than $\mathcal{S}_A$ due to preemptive actions.

Since
\begin{equation}
\begin{aligned}
& \min \left\{c_{9p}+c_{10,s}, c_{9p}+c_{10,p}+c_{12,s} \right\} \\
-& \min \left\{c_{9s}+c_{10,p}, c_{9p}+c_{10,p}+c_{12,s} \right\} \\
=& \min \left\{\Delta c_{10}-\Delta c_9, c_{12,s}-\Delta c_9 \right\}
\end{aligned}
\end{equation}
if both components are nonnegative, we have $\mathcal{C(S}_A) > \mathcal{C(S}_B)$ when 
\begin{equation}
\begin{aligned}
& p_{11} \min \left\{\Delta c_{10}-\Delta c_9, c_{12,s}-\Delta c_9 \right\} \\
>& (p_4 + p_7) \min \left\{c_{12,s}, c_r \right\} + p_{10} \Delta c_9,
\end{aligned}
\end{equation}
assuming $\Delta c_{10}>\Delta c_9$ and $c_{12,s}>\Delta c_9$.

On the other hand, $\mathcal{\hat{C}(S}_A) < \mathcal{\hat{C}(S}_B)$ is equivalent to
\begin{align}
\begin{split}\label{eq:4}
  p_{11} (\Delta c_{10}-\Delta c_9) <{}& (p_4 + p_7) c_r + p_{10} \Delta c_9.
\end{split} 
\end{align}

Therefore, if $\Delta c_9 < c_{12,s} < c_r$, $\Delta c_{9}<\Delta c_{10}$, and $p_{11}$ is sufficiently large compared to $p_4 + p_7$ and $p_{10}$, then it is possible that
\begin{align}
\begin{split}\label{eq:5}
  {}&(p_4 + p_7) \min \left\{c_{12,s}, c_r \right\} + p_{10} \Delta c_9 \\
  <& p_{11} \min \left\{\Delta c_{10}-\Delta c_9, c_{12,s}-\Delta c_9 \right\} \\
  \leq& p_{11} (\Delta c_{10}-\Delta c_9) \\
  <& (p_4 + p_7) c_r + p_{10} \Delta c_9,
\end{split} 
\end{align}
in which case the \algabbr solution will be suboptimal.

\subsection{Runtime Analysis of \algabbr}

\begin{figure}
    \centering
    \includegraphics[width=0.85\linewidth]{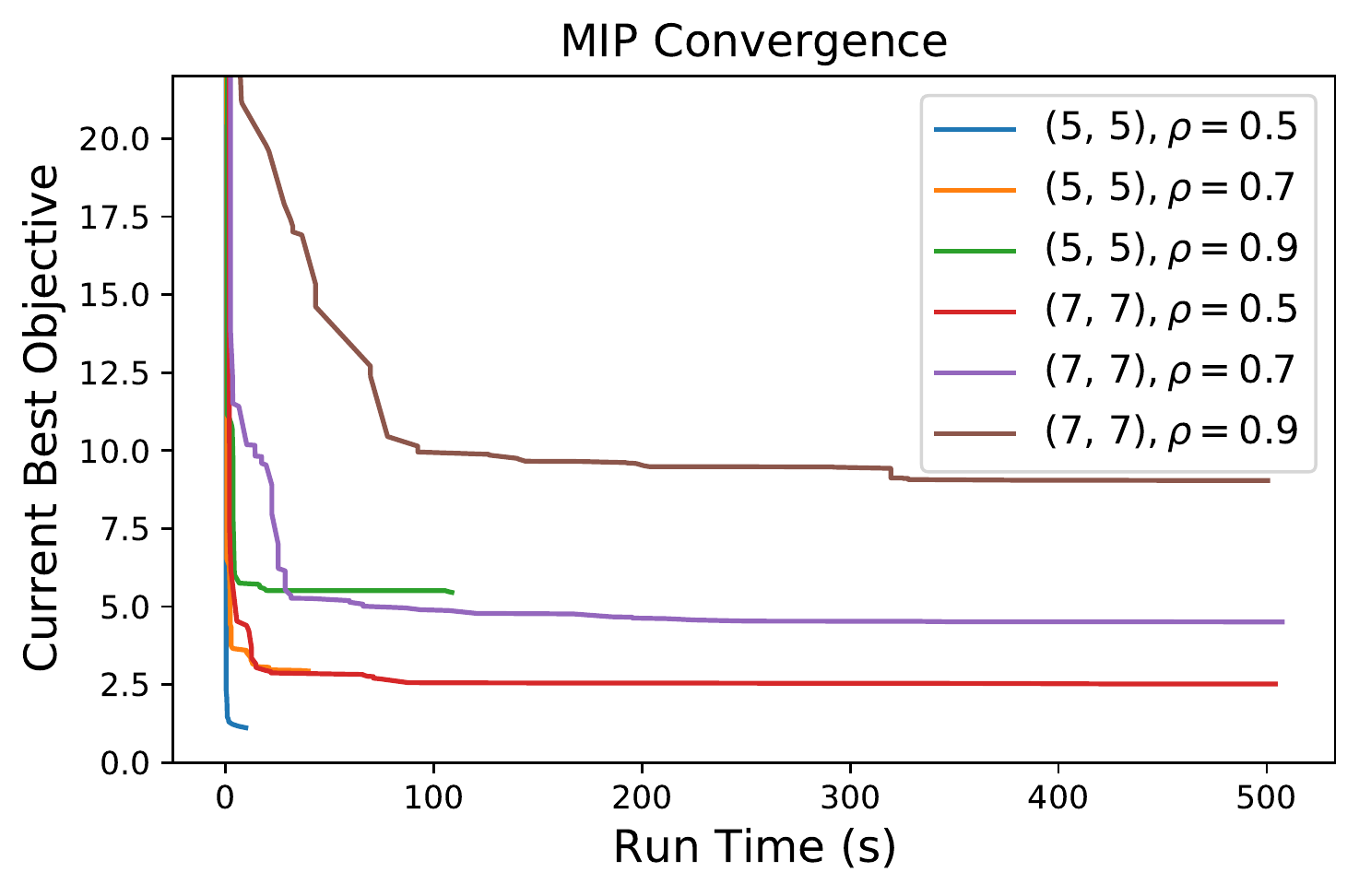}
    \includegraphics[width=0.85\linewidth]{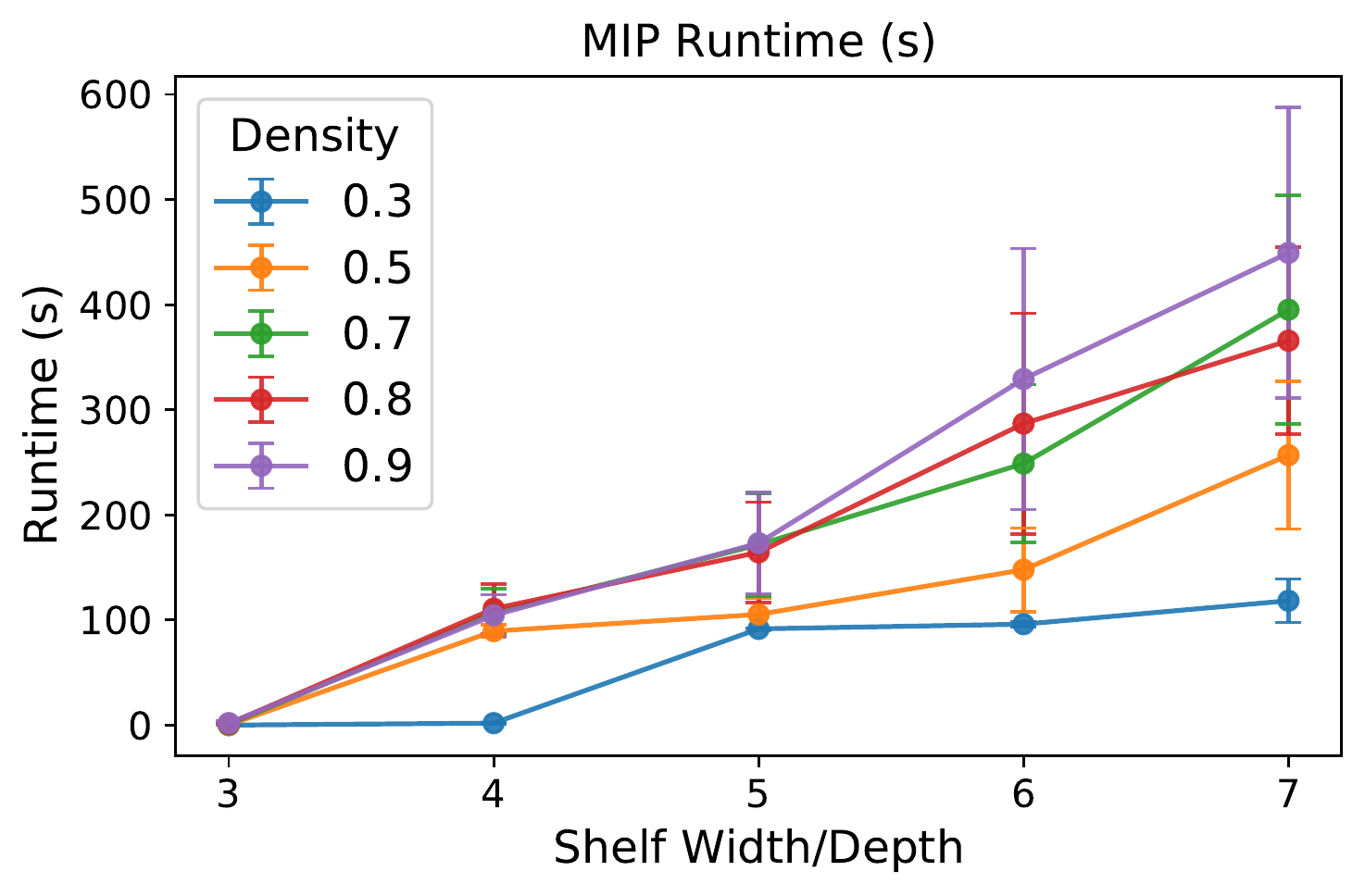}
    \caption{\algabbr runtime across shelf sizes and densities. Top: The objective of the best incumbent found during the process of optimization on 6 different problem instances (shelf width = 5 or 7, density = 0.3, 0.7, 0.9). The discontinuation of the lines indicates that a provably optimal solution has been found; otherwise, the optimization is run for 500s. Bottom: The MIP average runtime among the 675 problem instances (see Section~\ref{sec:sim_exp}), grouped by shelf width and density. The MIP is terminated if the best incumbent solution stops improving for 90s.
    }
    \vspace{-12pt}
    \label{fig:mip_runtime}
\end{figure}

Figure~\ref{fig:mip_runtime} plots the average computation time of \algabbr as a function of shelf size and density. The experiments were conducted using a Gurobi solver~\cite{gurobi} on a MacBook Pro with 2.4 GHz 8-Core Intel Core i9 and 32GB memory. The top figure plots the objective of the best incumbent found during the process of optimization on 6 different problem instances (shelf width = 5 or 7, density = 0.3, 0.7, 0.9). The discontinuation of the lines indicates that a provably optimal solution has been found; otherwise, the optimization is run for 500s. We can see that for 5x5 shelves, the solver is able to find an optimal solution within 120s even for 90\% density. For 7x7 shelves, the solver is not able to find a provably optimal solution within 500s, but the best objective has converged, indicating that a near-optimal solution has been found. Thus, for the 675 instances described in Section~\ref{sec:sim_exp}, we terminate the optimization if the best incumbent solution stops improving for 90s. The bottom figure plots the MIP average runtime grouped by shelf width and density. Unsurprisingly, the runtime scales with the shelf size and density. For a 7x7 shelf and 90\% density, 
it takes the solver about 8 minutes to find a near-optimal solution.

\subsection{More SLAX-RAY Simulation Results}

\begin{table}[t]
\vspace{4pt}
\centering
\resizebox{\columnwidth}{!}{
\begin{tabular}{@{}ccccc@{}}\toprule
No. & Metric & Random  & Priority-Greedy & \algabbr \\

\midrule
\multirow{3}*{4} 
  & \TTT & \srp{72}{20}  & \srp{84}{15}      & \srp*{100}{0}    \\
  & \SNT & \srp*{100}{0}  & \srp*{100}{0}      & \srp*{-}{-}    \\
  & \SAT & \sa{1.0}{0.1} & \sa{1.0}{0.1} & \sa*{-}{-}\\
  & \SCT & \scos{8.6}{2.0} & \scos{8.7}{1.9} & \scos*{-}{-}\\
\cmidrule{1-5}
\multirow{3}*{6} 
  & \TTT & \srp{58}{14}  & \srp{76}{10}      & \srp*{86}{5}    \\
  & \SNT & \srp{93}{23}  & \srp{99}{5}      & \srp*{100}{0}    \\
  & \SAT & \sa{1.1}{0.4} & \sa{1.1}{0.3} & \sa*{1.1}{0.5}\\
  & \SCT & \scos{10.1}{4.3} & \scos{9.3}{3.6} & \scos*{7.5}{6.3}\\
\cmidrule{1-5}
\multirow{3}*{8} 
  & \TTT & \srp{48}{11}  & \srp{67}{7}      & \srp*{73}{7}    \\
  & \SNT & \srp{83}{30}  & \srp{81}{35}      & \srp*{100}{0}    \\
  & \SAT & \sa{1.2}{0.6} & \sa{1.1}{0.6} & \sa*{1.0}{0.2}\\
  & \SCT & \scos{10.1}{5.7} & \scos{8.5}{4.7} & \scos*{7.5}{1.3}\\
\cmidrule{1-5}
\multirow{3}*{10} 
  & \TTT & \srp{40}{10}  & \srp{59}{7}      & \srp*{60}{7}    \\
  & \SNT & \srp{40}{35}  & \srp{53}{40}      & \srp*{98}{4}    \\
  & \SAT & \sa{0.6}{0.7} & \sa*{0.7}{0.6} & \sa{1.2}{0.2}\\
  & \SCT & \scos{5.1}{6.6} & \scos*{5.7}{5.4} & \scos{9.2}{2.2}\\
\bottomrule
\end{tabular}}
\caption{\textbf{Mechanical Search in Sim, $4\times3$ discretization.} Simulation results by SLAX-RAY over 600 trials on a $4\times3$ discretized shelf across 120 total object configurations. All averages are weighted by the retrieval probabilities of each object. Mean steps and cost are computed among successful searches only. \algabbr provides arrangements that lead to more objects that are directly visible and a search success rate among hidden objects 45-58\% higher than baselines. The smaller mean steps and cost for baselines in higher density shelves are because SLAX-RAY is only able to find objects in the front rows of those arrangements while it can find objects at the back of the shelf for \algabbr arrangements.}
\label{tab:results_slaxray_sim_43}
\vspace{-12pt}
\end{table}

We show simulation results that use the same $4\times3$ discretization as in the physical experiments. We discretize the shelf into a $5\times4$ grid with 4, 6, 8, or 10 cuboid objects, with each setting repeated 50 times for each arrangement. The objects are generated with random dimensions within the limit of the grid size, and are assigned retrieval probabilities $p_l$, drawn from a standard uniform distribution and normalized. We set the pushing cost of each object to be proportional to its volume and use a cost ratio of $\psi=1.3$. 

Table~\ref{tab:results_slaxray_sim_43} shows the results. Similar to Table~\ref{tab:results_slaxray_sim_54}, the mean steps and cost are computed only among successful searches. We see that \algabbr arrangements not only have the largest proportion of time when the requested target is readily retrievable, but the search success rate among hidden objects remains 98-100\% even for relatively dense shelves, and is 45-58\% higher than baselines for 10 objects. The smaller mean steps and cost for baselines in dense shelves are because SLAX-RAY can only successfully find objects in the front rows which has fewer steps, while \algabbr leaves sufficient space for SLAX-RAY to manipulate and find deeply-hidden objects at the back so its weighted average steps and cost among all successful trials are larger.

\end{document}